\let\svthefootnote\thefootnote
\newcommand\freefootnote[1]{%
  \let\thefootnote\relax%
  \footnotetext{\hspace{-1.5em}#1}%
  \let\thefootnote\svthefootnote%
}
\theoremstyle{plain}
\newtheorem{thm}{Theorem}[section]
\newtheorem{infthm}[thm]{Informal Theorem}
\newtheorem{prop}[thm]{Proposition}
\newtheorem{lem}[thm]{Lemma}
\newtheorem{cor}[thm]{Corollary}
\theoremstyle{definition}
\newtheorem{defn}{Definition}
\theoremstyle{remark}
\newtheorem{rmk}{Remark}[section]
\newcommand{\symm}[2]{\textnormal{Sym}_{#1}(\mathbb{R}^{#2})}
\newcommand{\symmm}[2]{\textnormal{Sym}_{#1}(#2)}
\newcommand{\edd}{\textnormal{gED}}
\newcommand{\conv}[1]{\star_{#1}}
\newcommand{\vs}[1]{\textcolor{blue}{{#1}}}
\begin{document}

%
\runningtitle{Polynomial Convolutional Networks}

%

\twocolumn[

\aistatstitle{On the Geometry and Optimization \\of Polynomial Convolutional Networks}

\aistatsauthor{Vahid Shahverdi * \And Giovanni Luca Marchetti * \And Kathl\'en Kohn *}

\aistatsaddress{KTH Royal Institute of Technology, Stockholm, Sweden} ]

\begin{abstract}
We study convolutional neural networks with monomial activation functions. Specifically, we prove that their parameterization map is regular and is an isomorphism almost everywhere, up to rescaling the filters. By leveraging on tools from algebraic geometry, we explore the geometric properties of the image in function space of this map -- typically referred to as neuromanifold. In particular, we compute the dimension and the degree of the neuromanifold, which measure the expressivity of the model, and describe its singularities. Moreover, for a generic large dataset, we derive an explicit formula that quantifies the number of critical points arising in the optimization of a regression loss.

\end{abstract}

\section{INTRODUCTION}\label{sec:intro}
Deep neural networks -- and, in general, parametric machine learning models -- define a space of functions as their parameters vary. These spaces are often referred to as \emph{neuromanifolds} \parencite{marchetti2025invitation, kohn2024geometry, calin2020neuromanifolds}. Understanding the geometry of neuromanifolds is a subtle yet fundamental challenge due to its intimate connection to the training process. Namely, neural networks learn by following a gradient flow that attracts the model to (an estimate of) the ground-truth function, which can be interpreted as minimizing a functional distance over the neuromanifold. Therefore, geometric problems over neuromanifolds -- such as nearest point problems -- are related to the learning dynamics of the corresponding models. 

\begin{figure}[t!]
    \centering
    \includegraphics[width=1.\linewidth]{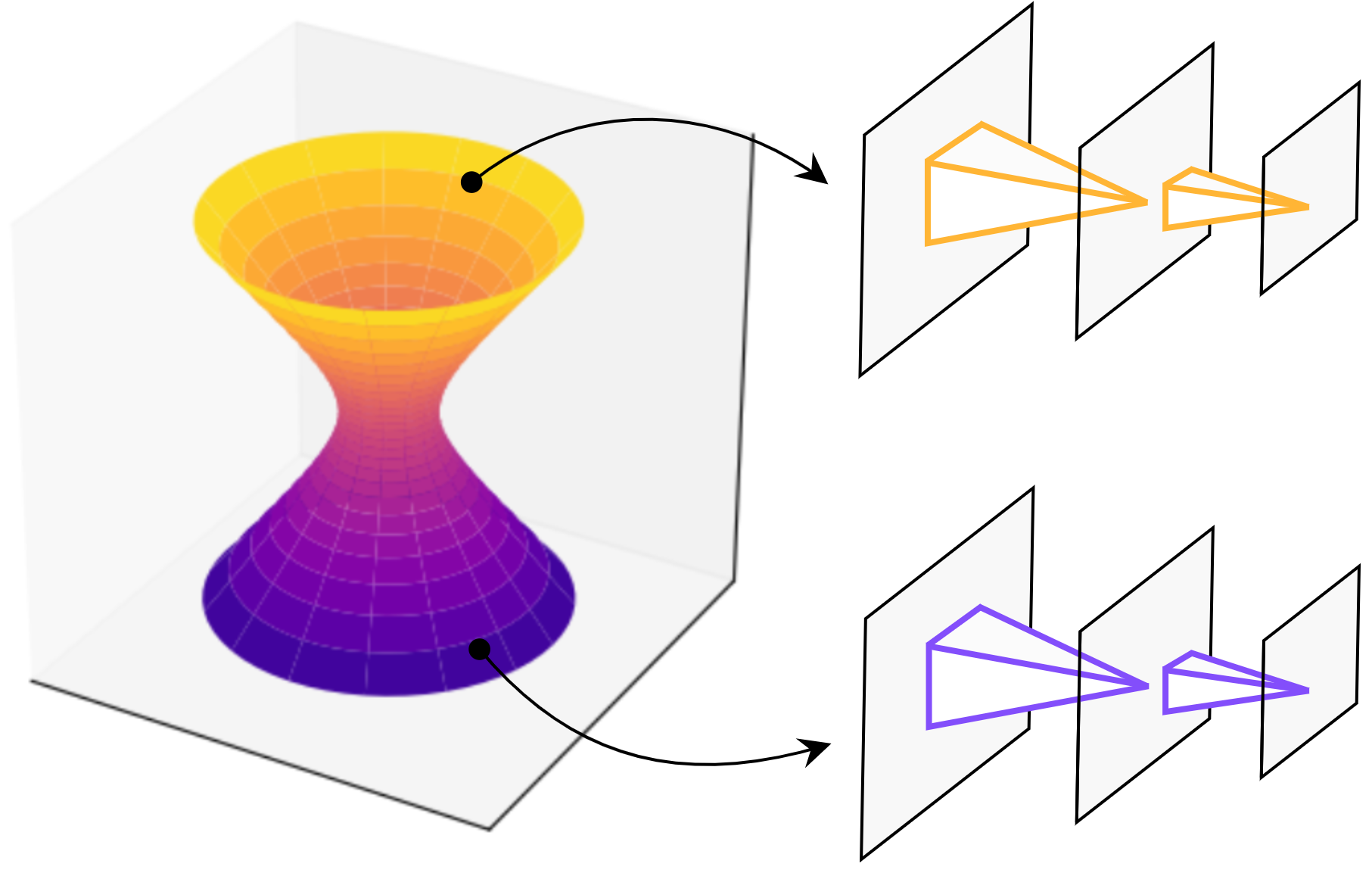}
    \caption{Illustration of a Segre--Veronese variety parametrizing CNNs.}
    \label{fig:segre}
\end{figure}

For neural networks with polynomial activation functions, the neuromanifold is a (semi-) algebraic set, i.e., it is defined by polynomial equalities and inequalities. This enables to exploit tools from the rich field of \emph{algebraic geometry} for analyzing neuromanifolds. In particular, various algebro-geometric invariants can provide insights into fundamental machine learning aspects. \freefootnote{\quad *Equal contribution.}For example, the \emph{dimension} of the neuromanifold  plays a central role in statistical learning theory\footnote{In this context, neuromanifolds are referred to as `hypothesis spaces', and are usually considered in a combinatorial version.} since, according to the Fundamental Theorem of Learning \parencite{shalev2014understanding}, it controls the sample complexity of learnability. Moreover, the \emph{degree} is a curvature-based invariant that controls the concentration of measure around the neuromanifold and, in particular, the approximation error of the given model \parencite{basu2023hausdorff}. Dimension and degree are, in other words, basic measures of expressivity. Lastly, the \emph{Euclidean distance degree} \parencite{draisma2016euclidean} is a modern invariant counting the singularities of the distance function over the neuromanifold from an external point. In particular, it upper-bounds the number of (locally-) nearest points. Since these invariants are well-understood for a wide class of algebraic varieties, they can provide insights into the learning process of networks of various architectures.

In this work, we consider deep \emph{Convolutional} Neural Networks (CNNs --  \cite{fukushima1979neural, lecun1995convolutional}) with monomial activation functions, and study their neuromanifolds. Our motivation is twofold. First, CNNs are popular models deployed in various signal processing domains. Historically, they have played a central role in several modern breakthroughs across deep learning \parencite{krizhevsky2012imagenet, van2016wavenet}, and have recently seen a resurgence in computer vision \parencite{liu2022convnet}. Second, the convolutional architecture is particularly suitable for algebraic geometry. Indeed, the convolution is a well-behaved algebraic operation, which translates into geometric properties for the associated neuromanifold and its parametrization. 

\subsection{Summary of Results}\label{sec:summres}
We provide theoretical results on both the geometry and the optimization of polynomial CNNs. Our arguments involve tools and ideas from algebraic geometry. In order to analyze neuromanifolds of polynomial CNNs, we study their parametrization. In this context, our main result states, informally, the following. 
\begin{infthm}[Section \ref{sec:geome}]\label{thm:inf1}
Up to rescaling each filter, the parameterization of the neuromanifold of a polynomial CNN is regular and is an isomorphism almost everywhere. The remaining fibers are finite. 
\end{infthm}
Intuitively, this shows that CNNs are parametrized in an `optimal way' since, once the scaling symmetries are factored out, the parameters are in a one-to-one regular correspondence (almost everywhere) with the associated functions. This is in contrast with other neural architectures -- such as fully-connected networks -- where the parametrization exhibits large fibers and critical points, translating into redundancy in the parameter space \parencite{kileel2019expressive}. 

An immediate consequence of Informal Theorem \ref{thm:inf1} is an expression for the dimension and the degree of the neuromanifold -- see Corollary \ref{cor:finiteparam}. As mentioned in Section \ref{sec:intro}, these two invaraints play a central role in learning theory. In particular, we observe that the dimension grows linearly w.r.t. the number of layers, while the degree grows (super-) exponentially. This provides an intuitive explanation to the importance of depth in neural networks; neuromanifolds of deeper CNNs remain low-dimensional -- translating into a low sample complexity of learning -- while efficiently filling their ambient function space -- translating into expressive power.  

Informal Theorem \ref{thm:inf1} implies other geometric properties. First, it follows that the neuromanifold is closed in the Zariski topology of its ambient space (see Proposition \ref{prop:zarboundary}), and therefore in the Euclidean one as well. In terms of learning dynamics, this shows that the neuromanifold contains its asymptotic limits. Another consequence is the description of \emph{singularities} of the neuromanifold, i.e., special CNNs where the neuromanifold looks degenerate. Studying singularities of neuromanifolds and their relation to the learning dynamics is the focus of Singular Learning Theory \parencite{watanabe2009algebraic,amari2006singularities}. It follows from Informal Theorem \ref{thm:inf1} that the singular points are the simplest possible -- specifically `nodal' singularities -- which, as discussed below, translates into advantages for the optimization dynamics. Such singularities can only come from `subnetworks', meaning that they arise when (specific) weights vanish -- see Corollary \ref{cor:singpts}. The relation between singularities and subnetworks might be interpreted as a geometric explanation of the implicit bias observed in neural networks to converge towards simpler architectures during training \parencite{frankle2018lottery}.

In order to prove Informal Theorem \ref{thm:inf1}, we first remove the scaling symmetries by working in the projective space, and then to rely on tools from projective geometry. In particular, we leverage on birational geometry which, roughly speaking, concerns with properties that are preserved almost everywhere. Moreover, we factorize the projectified parametrization into a linear projection and the \emph{Segre--Veronese} embedding. The latter is a classical map in algebraic geometry whose properties are well-understood -- see Figure \ref{fig:segre} for an illustration. 

As anticipated before, understanding the geometry of the neuromanifold enables us to study the learning dynamics of polynomial CNNs. In particular,  considering the square-error regression loss, we prove the following. 
\begin{infthm}[Section \ref{sec:optim}]
For large generic datasets, the square-error loss can be rephrased as a distance minimization over the neuromanifold from an external polynomial function. There is an explicit upper bound for the number of (complex) critical points of the loss. 
\end{infthm}
The formula above is obtained via the theory of the Euclidean distance degree, which has a known expression for the Segre--Veronese variety.
Our formula upper-bounds the number of both real and complex critical points. Since local minima are critical points, we obtain an upper bound for their number as well. Finally, we argue that  counting critical points over the neuromanifold is (almost) equivalent to counting them, up to scaling, over the parameter space, which is where optimization is performed in practice. Specifically, since the parametrization is regular (Theorem \ref{thm:inf1}), the dynamics are equivalent when considered in parameter space or over the neuromanifold. Lastly, singular points of the neuromanifold require special care; to this end, we show that they are not critical for the distance function (except for the 0-function), which is possible via our description of singularities. 

\section{RELATED WORK}
Since our work is concerned with neuromanifolds of polynomial CNNs, we review the literature around neuromanifolds and polynomial neural networks. 

\paragraph{Algebraic Geometry of Neuromanifolds.} Neuromanifolds have been analyzed via algebraic geometry in several instances. Fully-connected polynomial networks have been discussed in \parencite{kileel2019expressive, finkel2024activation}, with a focus on problems regarding dimensionality, while linear CNNs have been discussed in \parencite{kohn2022geometry, kohn2023function, shahverdi2024algebraic}, with a focus on singularities and critical points of the loss function. More recently, neuromanifolds of linear self-attention networks have been considered \parencite{henry2024geometrylightningselfattentionidentifiability}. Here, we contribute to this line of research by discussing the geometry of neuromanifolds defined by polynomial CNNs. While the previous literature focuses mainly on the linear case -- with only partial results for the general polynomial one -- our work is the first to provide a comprehensive description of both the parametrization and the neuromanifold of a polynomial model. 
\paragraph{Polynomial Activation Functions.} Standard activation functions for neural networks -- such as the Rectified Linear Unit (ReLU) -- are not polynomial. Nonetheless, networks with polynomial activations have been considered in the literature. Such networks are universal interpolators and approximators \parencite{constantinescu2023interpolation}. Various flavors of polynomial activations have been discussed, ranging from monomial versions of ReLU \parencite{berradi2018symmetric, li2019powernet}, to rational functions \parencite{boulle2020rational, telgarsky2017neural}, to piece-wise polynomial functions \parencite{lopez2019piecewise, hou2017convnets}. Finally, quadratic activations have appeared in neuroscience for modelling biological neural networks \parencite{adelson1985spatiotemporal}, and have sometimes been considered in theoretical studies of deep learning phenomena \parencite{gromov2023grokking, marchetti2024harmonics}. 

\section{BACKGROUND}\label{sec:back}
In this section, we introduce the necessary background around convolutional neural networks, as well as the relevant tools from algebraic geometry. 

\subsection{Polynomial Convolutional Networks}\label{sec:polconvs}
We start by introducing convolutional neural networks. For simplicity of notation, we focus on one-dimensional convolutions -- all theory and results can be extended verbatim to the higher-dimensional case. Given integers $k, s, d' \in \mathbb{N}$ representing filter size, stride, and output dimension respectively, the convolution between a filter $w \in \mathbb{R}^k$ and an input vector $x \in \mathbb{R}^d$, with $d = s(d' -  1) + k$, is the vector $ w \conv{s} x \in \mathbb{R}^{d'}$ defined for $ 0 \leq i < d'$ as: 
\begin{equation}
\label{eq:conv_star}
 (w \conv{s} x)[i] =   \sum_{0 \leq j < k} w[j] \  x[si + j].
\end{equation}
The convolution in Equation \eqref{eq:conv_star} is linear in $x$ and is represented by a $d' \times d$  \emph{Toeplitz matrix}. For example,  given $k=3$, $s=2$, and $d'=3$, the corresponding matrix is 
\[\begin{pmatrix}
    w[0]&w[1]&w[2]&0&0&0&0\\
    0&0&w[0]&w[1]&w[2]&0&0\\
    0&0&0&0&w[0]&w[1]&w[2]
\end{pmatrix}.\]

Moreover, the composition of convolutions can be rephrased as polynomial multiplications. We associate to the filter the following homogeneous bivariate polynomial of degree $k-1$ in $(a^s,b^s)$: 
\begin{equation}
    \label{eq:poly_correspondence}
    \pi_s(w) = \sum_{0 \leq i < k}w[i] \ a^{s(k-i-1)}b^{si} .
\end{equation}
Then the following property holds: an iterated convolution $v \conv{s} (w \conv{t} x)$ with respective strides $s, t$ coincides with a convolution $q \conv{st} x$ whose associated polynomial satisfies $\pi_{1}(q) = \pi_{t}(v) \ \pi_{1}(w)$.  

Convolutions can be composed in order to construct deep convolutional networks. To this end, fix an integer $L \in \mathbb{N}$ and sequences $\mathbf{k}, \mathbf{s} \in \mathbb{N}^{L}$, $\mathbf{d} \in \mathbb{N}^{L+1}$ such that $d_i = s_i(d_{i+1} + k_i)$ for all $i$. Moreover, consider a map $\sigma \colon \mathbb{R} \rightarrow \mathbb{R}$ playing the role of the activation function. 
\begin{defn}
A \emph{Convolutional Neural Network} (CNN) with weights $\mathbf{w} = (w_{0}, \ldots, w_{L-1}) \in \bigoplus_{0 \leq i < L}\mathbb{R}^{k_i}$ is the map $\varphi_\mathbf{w} \colon \mathbb{R}^{d_0} \rightarrow \mathbb{R}^{d_L}$ given by:
\begin{equation}
\varphi_\mathbf{w}(x) = w_{L-1} \conv{s_{L-1}} \sigma \left(\cdots \conv{s_1}  \sigma ( w_0 \conv{s_{0}} x   ) \right),
\end{equation}
where $\sigma$ is applied coordinate-wise.
\end{defn}
Given $r \in \mathbb{N}$, denote by $\sigma_r(x) = x^r$ the power function. CNNs with activation function $\sigma = \sigma_r$ are called \emph{polynomial}, and they are called \emph{linear} if $r = 1$ i.e., if $\sigma$ is the identity function. Polynomial CNNs define homogeneous polynomial functions of degree $r^{L-1}$, meaning that $\varphi_\mathbf{w} \in \symm{r^{L-1}}{d_0}^{d_L}$. Here, $\symm{\alpha}{\beta}$ denotes the space of symmetric tensors of degree $\alpha$ in $\beta$ variables, i.e., the vector space of dimension $\binom{\beta + \alpha - 1}{\alpha}$ whose canonical basis is given by monomials of degree $\alpha$ in $x[0], \ldots, x[\beta -1]$. 
\begin{defn}
The \emph{neuromanifold} of a polynomial CNN is the image in $\symm{r^{L-1}}{d_0}^{d_L}$ of the parametrization, i.e.,
\begin{equation}
\mathcal{M}_{\mathbf{d}, \mathbf{k}, \mathbf{s}, r} = \left\{\varphi_\mathbf{w} \ | \  \mathbf{w} \in \mathbb{R}^{|\mathbf{k}|} \right\}, 
\end{equation}
\end{defn}
where $|\mathbf{k}| = k_0 + \cdots + k_{L-1}$. By the Tarski-Seidenberg theorem, the neuromanifold is a semi-algebraic set, i.e., it can be defined by polynomial equalities and inequalities. However, we will show that it is actually an algebraic \emph{variety}, meaning that it is closed in Zariski topology of $\symm{r^{L-1}}{d_0}^{d_L}$, or, equivalently, that it can be defined by polynomial equalities alone -- see Proposition \ref{prop:zarboundary}. Lastly, throughout this work we will often consider complex coefficients. To this end, we denote by $\mathcal{M}_{\mathbf{d}, \mathbf{k}, \mathbf{s}, r}^\mathbb{C}$ the complexification of the neuromanifold. The latter can be defined by replacing real numbers $\mathbb{R}$ with complex ones $\mathbb{C}$ in all  definitions of this section. 

\subsection{Segre--Veronese Varieties}
Here, we recall the notions of projective spaces and Segre--Veronese varieties, together with their basic properties. As we will show, these are closely related to the neuromanifolds of CNNs. To this end, fix a field $\mathbb{K}$ (e.g., $\mathbb{R}$ or $\mathbb{C}$). For a vector space $V$ over $\mathbb{K}$, its \emph{projectification} $\mathbb{P} V$ consists of all the non-vanishing vectors up to rescaling or, equivalently, of all the lines
in $V$. Formally, it can be defined as the quotient $\mathbb{P} V =(V \setminus \{ 0\}) / (\mathbb{K} \setminus \{ 0\})$, where $\mathbb{K}$ acts on $V$ via scalar multiplication. Projective spaces will be relevant for us since, as we shall see, it is convenient to consider the parameters of polynomial CNNs -- i.e., the filters -- up to rescaling. 

Now, fix $k \in \mathbb{N}$, let $\mathbf{m}, \mathbf{p} \in \mathbb{N}^k$, and consider $k$ vector spaces $V_1, \ldots, V_k$ over $\mathbb{K}$, with $\dim(V_i) = p_i + 1$. 

\begin{defn}\label{def:segveremb}
    The \emph{Segre--Veronese embedding} is the map
    \begin{equation}
        \nu_{\mathbf{m}, \mathbf{p}} \colon \prod_{1 \leq i \leq k}\mathbb{P}V_i \rightarrow \mathbb{P} \bigotimes_{1 \leq i \leq k} 
      \symmm{m_i}{V_i}  
    \end{equation}
defined by taking tensor products of symmetric powers of vectors in the corresponding spaces. The \emph{Segre--Veronese variety} $\mathcal{V}_{\mathbf{m}, \mathbf{p}}$ is the image of $ \nu_{\mathbf{m}, \mathbf{p}}$.
\end{defn}
Explicitly, if $V_i = \mathbb{R}^{p_i + 1}$, then the Segre--Veronese variety consists of (tensor) products of $k$ monomials of corresponding degree $m_i$. In particular, $ \nu_{\mathbf{m}, \mathbf{p}} $ is an embedding, as suggested by the nomenclature, and $\mathcal{V}_{\mathbf{m}, \mathbf{p}}$ is a smooth projective variety of dimension $| \mathbf{p}| = p_1 + \cdots +  p_{k}$. When $k=1$, we refer to $\nu_{m,p}$ simply as Veronese embedding. For example, for $k=2$ and $p_1 = p_2 = m_1 = m_2 = 1$, $\mathcal{V}_{\mathbf{m}, \mathbf{p}}$ coincides with a smooth quadric in the three-dimensional projective space $\mathbb{P}(V_1 \otimes V_2)$, which can be represented as a hyperboloid -- see Figure \ref{fig:segre}. As we shall see, since $\varphi_\mathbf{w}(x)$ depends polynomially on $\mathbf{w}$, the prametrization and the neuromanifold of polynomial CNNs are closely related to the Segre-Veronese embedding and variety, respectively. 

\subsection{Euclidean Distance Degree}\label{sec:eucdd}
We now recall the \emph{Euclidean distance degree} \parencite{draisma2016euclidean} -- an invariant in algebraic geometry that will be central in our work. Intuitively, this invariant counts the number of critical points of the Euclidean distance function from (the smooth locus of) an algebraic variety to a fixed external point. Formally, let $X \subset \mathbb{R}^n$ be an algebraic variety, whose smooth locus is denoted by $X_{\textnormal{reg}}$. Let $A$ be a $n \times n$ symmetric positive-definite matrix inducing a distance $\textnormal{d}_A(x, y)^2 = (x-y)^\top A \ (x-y) $ for $x, y \in \mathbb{R}^n$. We consider the (squared) distance function from an anchor $u \in \mathbb{R}^n \setminus X$ to the smooth locus of $X$: 
\begin{equation}\label{eq:distfunc}
\begin{array}{ccc}
  X_{\textnormal{reg}}  & \rightarrow & \mathbb{R}_{\geq 0} \\
  x & \mapsto & \textnormal{d}_A(x, u)^2.
\end{array}
\end{equation}
For what follows, it is necessary to consider complex coefficients. To this end, we complexify the above map, extending it as $X^\mathbb{C}_\textnormal{reg} \rightarrow \mathbb{C}$, where $X^\mathbb{C} \subset \mathbb{C}^n$ denotes the complexification of $X$. 
\begin{defn}\label{def:edd}
The \emph{Euclidean distance degree} of $X$ with respect to $A$ is the  number of complex critical points of the complexified map $X^\mathbb{C}_\textnormal{reg} \rightarrow \mathbb{C}$ for generic~$u$. 
\end{defn}
Geometrically, a point $x \in X_\textnormal{reg}$ is critical for the distance map if, and only if, $x - u$ is perpendicular to the tangent space of $X$ at $x$ according to the scalar product induced by $A$. Moreover, it is possible to extend the notion of Euclidean distance degree to projective varieties $X \subset \mathbb{P}\mathbb{R}^{n}$ by considering their de-projectification, i.e., the corresponding affine cone in $\mathbb{R}^n$.

\begin{figure}[!ht]
    \centering
    \includegraphics[width=0.43\linewidth]{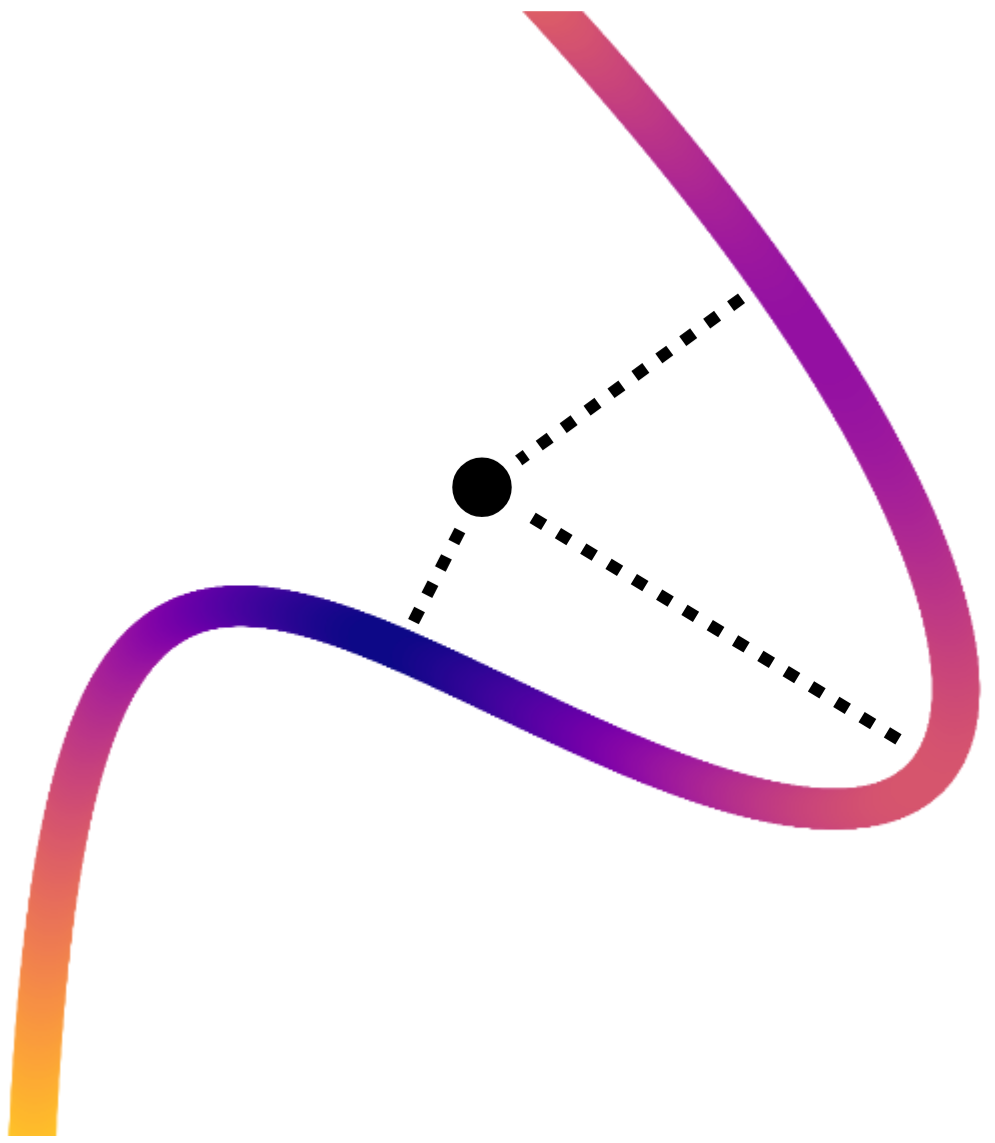}
    \caption{Distance function from an anchor to a curve, visualized as a color gradient. The critical values are denoted by dotted lines.}
    \label{fig:eddbezier}
\end{figure}

For generic $A$, the Euclidean distance degree is the same, achieving its maximal value \parencite[Theorem 6.11]{draisma2016euclidean}. This number is referred to as \emph{generic} Euclidean distance degree of $X$, denoted by $\edd(X) \in \mathbb{N}$. The following result -- which will be necessary in our forthcoming discussion -- provides a closed formula for the generic Euclidean distance degree of Segre--Veronese varieties. 

\begin{thm}[\cite{kozhasov2023minimal}]\label{th:segverdeg}
The generic Euclidean Distance degree of the Segre--Veronese variety is:
\begin{align*}
    \edd(\mathcal{V}_{\mathbf{m}, \mathbf{p}}) = &\sum_{0 \leq i \leq |\mathbf{p}|} (-1)^i \left( 2^{|\mathbf{p}| + 1 - i} - 1 \right)(|\mathbf{p}| - i)! \\ 
    &\sum_{\substack{|\mathbf{\alpha}|=i \\ \forall j \ \alpha_j \leq p_j}} \prod_{1\leq j \leq k} \frac{\binom{p_j + 1}{\alpha_j}}{(p_j - \alpha_j)!} \ m_j^{p_j - \alpha_j},
\end{align*}
where $|\mathbf{p}| = p_1 + \cdots + p_{k}$. 
\end{thm}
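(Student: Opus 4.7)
The plan is to invoke a Chern-class formula for the Euclidean distance degree of a smooth complex projective variety (see \cite{draisma2016euclidean} and subsequent refinements). For a smooth $X \subset \mathbb{P}^N$ of dimension $n$ with hyperplane class $H$, such formulas express $\edd(X)$ as an alternating sum of polar degrees weighted by $(-1)^i(2^{n+1-i}-1)$, where each polar degree is an intersection product on $X$ between a Chern class of $T_X$ (or equivalently of $\Omega_X^1$, appropriately twisted by $\mathcal{O}(1)$) and a power of $H$. Since $\mathcal{V}_{\mathbf{m},\mathbf{p}}$ is smooth---it is the isomorphic image of $X := \prod_{j} \mathbb{P}^{p_j}$ under the very ample embedding by $\mathcal{O}(m_1, \ldots, m_k)$---this general formula is directly applicable.

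Next, one specializes the intersection-theoretic data to the product of projective spaces. The Chow ring of $X = \prod_j \mathbb{P}^{p_j}$ is generated by the factorwise hyperplane classes $h_j$ subject to $h_j^{p_j+1}=0$, the pullback of the ambient hyperplane class is $H = \sum_j m_j h_j$, and the total dimension is $n = |\mathbf{p}|$. Since $T_X$ splits as $\bigoplus_j \pi_j^{*}T_{\mathbb{P}^{p_j}}$, the Euler sequence on each factor yields
\begin{equation*}
c(T_X) = \prod_{j=1}^k (1+h_j)^{p_j+1}, \qquad c_i(T_X) = \sum_{\substack{|\alpha|=i \\ \alpha_j \le p_j}} \prod_{j} \binom{p_j+1}{\alpha_j}\, h_j^{\alpha_j}.
\end{equation*}
Expanding $H^{n-i}$ via the multinomial theorem and using that $h_1^{p_1} \cdots h_k^{p_k}$ integrates to $1$, the only terms surviving in $\int_X c_i(T_X) \cdot H^{n-i}$ correspond to $\beta_j = p_j - \alpha_j$, yielding
\begin{equation*}
\int_X c_i(T_X) \cdot H^{n-i} = (n-i)! \sum_{\substack{|\alpha|=i \\ \alpha_j \le p_j}} \prod_{j} \frac{\binom{p_j+1}{\alpha_j}}{(p_j-\alpha_j)!}\, m_j^{p_j-\alpha_j}.
\end{equation*}
This is precisely the inner sum in the claim; combining with the weights $(-1)^i (2^{n+1-i}-1)$ reproduces the stated formula.

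The principal technical obstacle is to verify that the correct Chern-class formula for $\edd$ can be massaged into the shape $\sum_i (-1)^i (2^{n+1-i}-1) \int_X c_i(T_X) \cdot H^{n-i}$ sketched above. The original formula of \cite{draisma2016euclidean} is most naturally phrased in terms of the principal parts bundle $P^1_X(1)$---equivalently, of $\Omega_X^1(1)$ via the jet sequence $0 \to \Omega_X^1(1) \to P^1_X(1) \to \mathcal{O}_X(1) \to 0$---so reconciling this with our tangent-bundle expression requires the splitting principle and careful tracking of signs and twists. The Segre--Veronese case is nevertheless tractable because the bundles in question all split as sums of line bundles along the $k$ projective factors, so the computation ultimately reduces to the combinatorial identity above. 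Carrying out this reduction in full, and handling the factor $(2^{n+1-i}-1)$ via a generating-function argument, is the part of the proof that constitutes the bulk of the technical work in \cite{kozhasov2023minimal}.
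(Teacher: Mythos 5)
The paper does not actually prove this statement: Theorem 3.4 is imported verbatim from \cite{kozhasov2023minimal}, so there is no in-paper argument to compare against. Your proposal is a correct reconstruction of the standard derivation (and essentially the one in the cited source): for a smooth $X\subset\mathbb{P}^N$ of dimension $n$ in generic position, summing the polar degrees gives $\edd(X)=\sum_{i}(-1)^i\left(2^{n+1-i}-1\right)\deg\left(c_i(T_X)\cdot H^{n-i}\right)$, where the weight arises from the elementary identity $\sum_{0\leq i\leq j}\binom{j+1}{i+1}=2^{j+1}-1$ applied to the polar-class formula of \cite{draisma2016euclidean}, and your specialization to $\prod_j\mathbb{P}^{p_j}$ via $c(T_X)=\prod_j(1+h_j)^{p_j+1}$ and $H=\sum_j m_jh_j$ correctly produces the inner sum with the multinomial factor $(n-i)!/\prod_j(p_j-\alpha_j)!$. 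The one step you defer (reconciling the bundle-theoretic formulation with the tangent-bundle expression) is standard and not a gap.
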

As anticipated in Section \ref{sec:intro}, machine learning models are trained to minimize, ideally, the distance to the ground-truth function. Therefore, the distance function over neuromanifolds and its critical points are closely related to the learning process. This motivates the study of the Euclidean distance degree of neuromanifolds, which we will explore for polynomial CNNs in Section \ref{sec:optim}.  

\section{CONVOLUTIONAL NEUROMANIFOLDS}\label{sec:convneu}
In this section, we study the neuromanifolds of polynomial CNNs, and present the main results of this work. We first discuss geometric properties -- such as dimension, projectification, and relation to Segre--Veronese varieties -- and then proceed to compute the generic Euclidean distance degree of the neuromanifold. We provide most of the proofs in the appendix -- see Section \ref{sec:remproof}. Moreover, in Section \ref{sec:toyexmp}, we provide a toy example where all the quantities discussed here can be computed explicitly.
\begin{figure*}[th!]
\captionsetup[subfigure]{justification=centering}
    \centering
    \hspace{1em}
    \begin{subfigure}[b]{.27\linewidth}
        \centering
        \includegraphics[width=\linewidth]{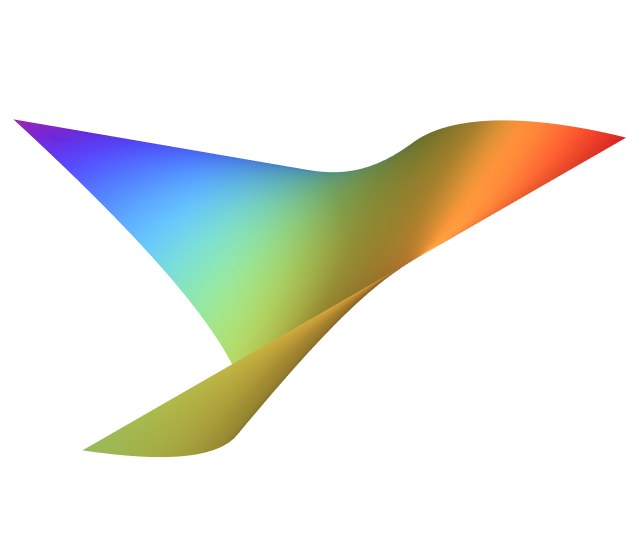}
        \subcaption*{$r = 2$}
    \end{subfigure}
    \hfill
    \begin{subfigure}[b]{.27\linewidth}
        \centering
        \includegraphics[width=\linewidth]{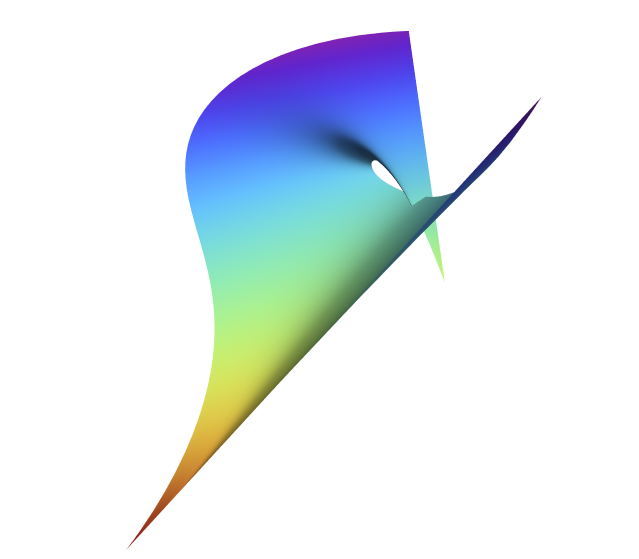}
        \subcaption*{$r = 5$}
    \end{subfigure}
    \hfill
    \begin{subfigure}[b]{.27\linewidth}
        \centering
        \includegraphics[width=\linewidth]{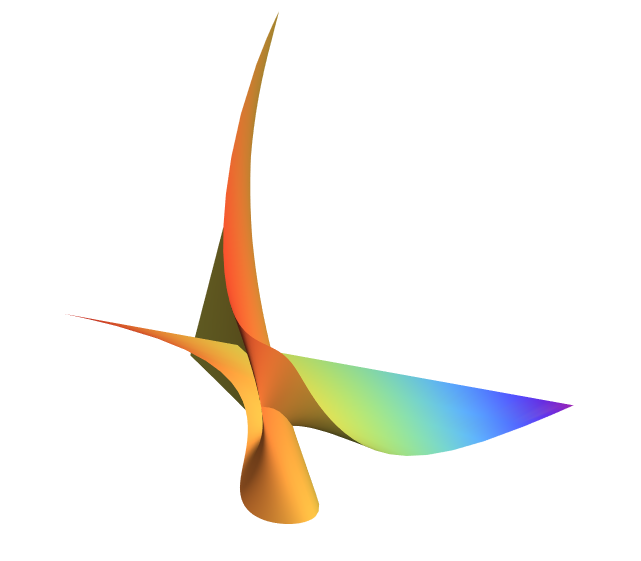}
        \subcaption*{$r = 10$}
    \end{subfigure}
    \caption{Visualization of two-dimensional charts of neuromanifolds (over $\mathbb{R}$) corresponding to $(k_0,k_1)=(2,2)$ projected orthogonally to $\mathbb{R}^3$, with varying activation degree. }
    \label{fig:neur}
\end{figure*}
\subsection{Geometry}\label{sec:geome}
Throughout this section, we will leverage on principles and tools from algebraic geometry. Therefore, it will be convenient to work with complex coefficients. For simplicity, we will abuse the notation and use the symbols from Section \ref{sec:polconvs} to denote the analogous complex object, obtained by replacing $\mathbb{R}$ with $\mathbb{C}$ in the corresponding definition. While all the results of this section are stated and proved over complex numbers, almost all of them extend a posteriori to the real case -- see Remark \ref{rem:realcase}.

We start with a simple but powerful property of convolutions.
\begin{lem}\label{lem:fullrk}
    For $w \in \mathbb{C}^k \setminus \{ 0 \}$ and $s \in \mathbb{N}$, the convolution map $x \mapsto w \conv{s} x$ has full rank. 
\end{lem}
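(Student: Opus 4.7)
The plan is to show that the $d' \times d$ Toeplitz matrix $T_w$ representing $x \mapsto w \conv{s} x$ has full rank $d'$. Since $d = s(d'-1) + k \geq d'$ for $s, k \geq 1$, full rank amounts to full row rank, i.e., surjectivity, which I would establish by exhibiting an invertible $d' \times d'$ submatrix of $T_w$.

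First, I would introduce $j_0 = \min\{j : w[j] \neq 0\}$, which exists because $w \neq 0$. Then I would consider the submatrix $M$ of $T_w$ obtained by restricting to the columns indexed by $c_{i'} = s i' + j_0$ for $i' = 0, \ldots, d'-1$. These indices are strictly increasing in $i'$ since $s \geq 1$, hence distinct and valid column labels. Recalling that row $i$ of $T_w$ carries $w[0], \ldots, w[k-1]$ in columns $s i, \ldots, s i + k - 1$, the $(i, i')$-entry of $M$ equals $w\bigl[j_0 + s(i' - i)\bigr]$ when this shifted index lies in $\{0, \ldots, k-1\}$, and vanishes otherwise.

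The main step is then to observe that $M$ is upper triangular with nonzero diagonal. On the diagonal ($i = i'$), the entry is $w[j_0] \neq 0$. Below the diagonal ($i > i'$), the shifted index $j_0 + s(i' - i)$ is strictly less than $j_0$, so it either is negative -- contributing $0$ by the Toeplitz support -- or lies in $\{0, \ldots, j_0 - 1\}$, contributing $0$ by the minimality of $j_0$. Consequently, $\det M = w[j_0]^{d'} \neq 0$, and therefore $T_w$ has rank $d'$.

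I do not anticipate a real obstacle; the argument is essentially careful bookkeeping on the Toeplitz structure. The one ingredient worth isolating is the choice of leading pivots via $j_0$, which forces an upper-triangular submatrix rather than a merely full-rank one. An alternative route would be to pass through the polynomial correspondence $\pi_s$ from Equation~\eqref{eq:poly_correspondence} and invoke that multiplication by a nonzero polynomial is injective in the integral domain $\mathbb{C}[a, b]$, but the direct linear-algebra approach above is both shorter and more transparent, and it will be reused when the lemma is invoked later in Section~\ref{sec:geome}.
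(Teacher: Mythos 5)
Your argument is correct and is essentially the paper's proof in a cleaner packaging: the paper also reduces to a nonzero leading filter entry and then walks down the columns at positions $0, s, 2s, \ldots$ to kill the coefficients of a putative row dependence one at a time, which is exactly the upper-triangularity of your submatrix $M$. Exhibiting the explicit $d'\times d'$ minor with determinant $w[j_0]^{d'}$ is a tidy way to state the same bookkeeping.
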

\begin{proof}
See Section \ref{app:fullrk} in the appendix.  
\end{proof}
Consider now a polynomial CNN  $\varphi_\mathbf{w}$ parametrized by weights $\mathbf{w} \in \mathbb{C}^{|\mathbf{k}|}$. 


\begin{cor}\label{cor:basepts}
    If $\varphi_\mathbf{w}(x) = 0$ for all $x \in \mathbb{C}^{d_0}$, then $w_i = 0$ for some $i$. 
\end{cor}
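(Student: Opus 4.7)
The plan is to prove the contrapositive: if $w_i \neq 0$ for every $i$, then $\varphi_\mathbf{w}$ is not identically zero on $\mathbb{C}^{d_0}$. In fact, I intend to establish the stronger statement that under this hypothesis, $\varphi_\mathbf{w} \colon \mathbb{C}^{d_0} \to \mathbb{C}^{d_L}$ is \emph{surjective}, from which the corollary follows immediately (since $d_L \geq 1$). The argument is a straightforward induction on the depth $L$.

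For the base case $L = 1$, we have $\varphi_\mathbf{w}(x) = w_0 \conv{s_0} x$, which is a linear map represented by a Toeplitz matrix from $\mathbb{C}^{d_0}$ to $\mathbb{C}^{d_1}$ with $d_0 \geq d_1$ (as guaranteed by the standing dimension relation, since $d_i = s_i(d_{i+1}-1) + k_i \geq d_{i+1}$ whenever $s_i, k_i \geq 1$). Because $w_0 \neq 0$, Lemma \ref{lem:fullrk} yields full rank $d_1$, hence surjectivity. For the inductive step, I would decompose $\varphi_\mathbf{w} = (w_{L-1} \conv{s_{L-1}} \cdot) \circ \sigma_r \circ \varphi_{\mathbf{w}'}$, where $\mathbf{w}' = (w_0, \dots, w_{L-2})$ parametrizes an $(L-1)$-layer subnetwork. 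The induction hypothesis makes $\varphi_{\mathbf{w}'}$ surjective onto $\mathbb{C}^{d_{L-1}}$. The coordinate-wise $r$-th power $\sigma_r$ is surjective on $\mathbb{C}^{d_{L-1}}$ since every complex number admits an $r$-th root. Lastly, Lemma \ref{lem:fullrk} applied to $w_{L-1} \neq 0$ ensures the final convolution is a full-rank, and therefore surjective, linear map onto $\mathbb{C}^{d_L}$. A composition of surjections is surjective, closing the induction.

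The only mild subtlety I anticipate is justifying that \emph{full rank implies surjectivity} at each layer, which requires the inequality $d_i \geq d_{i+1}$; this is a direct bookkeeping check from the definition of the convolution dimensions. The essential reliance on the ambient field being $\mathbb{C}$ enters only through the surjectivity of $\sigma_r$ on $\mathbb{C}$; over $\mathbb{R}$ with even $r$ this step breaks, and one would instead argue via Zariski density of the image — but since the statement is over $\mathbb{C}$, the above suffices and no further obstacle is expected.
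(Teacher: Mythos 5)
Your proposal is correct and takes essentially the same route as the paper's proof: argue the contrapositive, invoke Lemma \ref{lem:fullrk} for each convolution, and propagate a ``large image'' property through the layer-wise composition. The only (cosmetic) difference is that you track surjectivity of each layer map -- using algebraic closedness of $\mathbb{C}$ to make $\sigma_r$ surjective -- whereas the paper tracks openness; both immediately give $\varphi_\mathbf{w} \neq 0$.
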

\begin{proof}
Suppose that $w_i \not = 0 $ for all $i$. By Lemma \ref{lem:fullrk}, for all $i$ the map $x \mapsto \sigma_r( w_i \conv{s_i} x)$ is open, i.e., it sends open sets to open ones. Openness is preserved by composition and, in particular,
$\varphi_\mathbf{w} \not = 0$, as desired. 
\end{proof}

\begin{rmk}\label{rmk:basepts}
The property established by Corollary \ref{cor:basepts} is characteristic of (polynomial) CNNs, and does not hold for other network architectures. For example, a fully-connected network with arbitrary activations and at least two layers will not satisfy it. To illustrate this, consider a $2$-layer fully-connected network $W_1 \cdot \sigma(W_0 \cdot x)$, where $x \in \mathbb{R}^{d_0}$ is the input, $W_0 \in \mathbb{R}^{d_1 \times d_0}$ and $W_1 \in \mathbb{R}^{d_2 \times d_1}$ are the weight matrices, and $\sigma \colon \mathbb{R} \rightarrow \mathbb{R}$ is the activation function. Assuming $d_1 > 1$, if the first two rows of $W_0$ coincide and the other ones vanish, while the first two columns of $W_1$ are opposite, then the network vanishes on all $x \in \mathbb{R}^{d_0}$. 
\end{rmk}

As a consequence of Corollary \ref{cor:basepts}, it is possible to projectify the neuromanifold and its parametrization. Indeed, the result implies that the map $\mathbf{w} \mapsto \varphi_\mathbf{w}$ parametrizing the neuromanifold induces an algebraic morphism between the corresponding projective spaces: 
\begin{equation}\label{eq:projsegver}
    \overline{\varphi}:\prod_{0 \leq i < L} \mathbb{P} \mathbb{C}^{k_i} \rightarrow  
    \mathbb{P} \symmm{r^{L-1}}{\mathbb{C}^{d_0}}^{d_L}.  
\end{equation} 
We denote by $\mathbb{P}\mathcal{M}_{\mathbf{d}, \mathbf{k}, \mathbf{s}, r}^\mathbb{C}$ the image of $\overline{\varphi}$ and deem it \emph{projective neuromanifold}. We deduce the following relevant property of the (projective) neuromanifold. 
\begin{prop}\label{prop:zarclosed}
Both $\mathbb{P}\mathcal{M}_{\mathbf{d}, \mathbf{k}, \mathbf{s}, r}^\mathbb{C}$ and $\mathcal{M}_{\mathbf{d}, \mathbf{k}, \mathbf{s}, r}^\mathbb{C}$ are closed in the Zariski topology of their respective ambient spaces. 
\end{prop}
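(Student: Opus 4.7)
The plan is to handle the projective statement first using completeness of projective varieties, and then to deduce the affine statement by identifying $\mathcal{M}^\mathbb{C}_{\mathbf{d}, \mathbf{k}, \mathbf{s}, r}$ with the affine cone over $\mathbb{P}\mathcal{M}^\mathbb{C}_{\mathbf{d}, \mathbf{k}, \mathbf{s}, r}$. Recall that the parametrization $\mathbf{w} \mapsto \varphi_\mathbf{w}$ is multi-homogeneous in the filters $w_0, \ldots, w_{L-1}$, so it naturally induces a rational map between the corresponding projective spaces. The first task is to upgrade this to a regular morphism.

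For that step, I would argue as follows. A multi-homogeneous polynomial map descends to an everywhere-defined morphism on the product of projective spaces if and only if it has no base points, i.e., no point at which every homogeneous component of the output vanishes simultaneously. This is exactly the content of Corollary \ref{cor:basepts}: whenever all $w_i \neq 0$, the polynomial function $\varphi_\mathbf{w}$ is not identically zero. Hence $\overline{\varphi}$ in \eqref{eq:projsegver} is a well-defined regular morphism on the full product $\prod_{0 \leq i < L}\mathbb{P}\mathbb{C}^{k_i}$.

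Next, I would invoke the classical fact that a product of projective spaces is itself a projective (hence complete) variety, by way of the iterated Segre embedding, and that regular morphisms from complete varieties to quasi-projective varieties have Zariski-closed images. Applying this to $\overline{\varphi}$ immediately yields that $\mathbb{P}\mathcal{M}^\mathbb{C}_{\mathbf{d}, \mathbf{k}, \mathbf{s}, r}$ is Zariski-closed in $\mathbb{P}\,\symmm{r^{L-1}}{\mathbb{C}^{d_0}}^{d_L}$. To then transfer the result to the affine setting, I would verify that $\mathcal{M}^\mathbb{C}_{\mathbf{d}, \mathbf{k}, \mathbf{s}, r}$ coincides with the affine cone over $\mathbb{P}\mathcal{M}^\mathbb{C}_{\mathbf{d}, \mathbf{k}, \mathbf{s}, r}$. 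One inclusion is clear since $0 = \varphi_0$ lies in $\mathcal{M}^\mathbb{C}_{\mathbf{d}, \mathbf{k}, \mathbf{s}, r}$ and every nonzero network sits on a well-defined projective class. Conversely, because $\varphi_\mathbf{w}$ depends linearly on the last filter $w_{L-1}$, rescaling $w_{L-1}$ by any $\lambda \in \mathbb{C}$ realizes $\lambda \varphi_\mathbf{w}$ as $\varphi_{\mathbf{w}'}$ for some $\mathbf{w}'$, so $\mathcal{M}^\mathbb{C}_{\mathbf{d}, \mathbf{k}, \mathbf{s}, r}$ is closed under scalar multiplication. Since the affine cone over any Zariski-closed projective subset is Zariski-closed in the ambient affine space, the affine statement follows.

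The main subtlety I expect lies in the first step: showing that $\overline{\varphi}$ extends to a genuine morphism rather than just a rational map. Without base-point-freeness, the image would only be constructible, and Zariski closure could fail. Fortunately, Corollary \ref{cor:basepts} is precisely tailored to rule this out, and it is itself a distinctive feature of the convolutional architecture (as stressed in Remark \ref{rmk:basepts}); the rest of the argument is then a formal consequence of the completeness of projective varieties together with the multi-homogeneity of the parametrization.
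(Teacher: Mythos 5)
Your proposal is correct and follows essentially the same route as the paper: base-point-freeness from Corollary \ref{cor:basepts} makes $\overline{\varphi}$ a genuine morphism, completeness of (products of) projective spaces gives Zariski-closedness of the image, and homogeneity of the parametrization identifies $\mathcal{M}^\mathbb{C}_{\mathbf{d}, \mathbf{k}, \mathbf{s}, r}$ with the affine cone over its projectification. The only difference is presentational -- you spell out the cone identification and the scalar-multiplication closure via the last filter, which the paper leaves implicit.
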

\begin{proof}
See Section \ref{app:zarclosed} in the appendix. 
\end{proof}
In order to analyse $\overline{\varphi}$, we provide another technical result that enables to rephrase polynomial CNN layers in terms of convolutions. To this end, fix a filter $w \in \mathbb{C}^k$, a stride $s$, and an exponent $r$. 
\begin{prop}\label{prop:converonese}
 For every $x \in \mathbb{C}^n$, $\sigma_r(w \conv{s} x)$ can be written as a convolution $\tilde{w} \conv{\tilde{s}} \tilde{x}$ with stride $\tilde{s} = s \binom{r+k-1}{r}$ and
filter size  $\tilde{k}=\binom{r+k-1}{r}$. Moreover, $\tilde{x}$ and $\tilde{w}$ consist of monomials in $x$ and $w$, respectively.  
\end{prop}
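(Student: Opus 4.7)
The plan is to unravel $\sigma_r(w \star_s x)$ componentwise using the multinomial theorem, and then repackage the resulting sum as a single convolution with a larger filter and a proportionally larger stride. Concretely, I would start from
\[
(w \star_s x)[i] \;=\; \sum_{j=0}^{k-1} w[j]\,x[si+j],
\]
and apply the multinomial expansion to its $r$-th power, obtaining
\[
\sigma_r(w \star_s x)[i] \;=\; \sum_{|\alpha|=r} \binom{r}{\alpha}\, w^{\alpha}\,\prod_{l=0}^{k-1} x[si+l]^{\alpha_l},
\]
where $\alpha$ ranges over the $\tilde k := \binom{r+k-1}{r}$ multi-indices of total degree $r$ in $k$ variables.

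Next, I would fix an enumeration $\alpha^{(0)},\dots,\alpha^{(\tilde k-1)}$ of these multi-indices and define
\[
\tilde w[j] \;=\; \binom{r}{\alpha^{(j)}}\, w^{\alpha^{(j)}}, \qquad 0 \le j < \tilde k,
\]
so $\tilde w \in \mathbb{C}^{\tilde k}$ is a vector of (scaled) monomials in $w$. The input $\tilde x$ is defined windowwise: for each output index $i$ and each $0 \le j < \tilde k$, set
\[
\tilde x[\tilde s \, i + j] \;=\; \prod_{l=0}^{k-1} x[si+l]^{\alpha^{(j)}_l},
\]
with $\tilde s = s\tilde k$. Any remaining entries of $\tilde x$ (those not of the form $\tilde s i + j$ with $0\le j<\tilde k$) can be filled with an arbitrary monomial in $x$, e.g.\ $x[0]^r$, so that every entry of $\tilde x$ is a monomial in $x$. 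Substituting these definitions into the convolution formula (\ref{eq:conv_star}) and invoking the multinomial identity above immediately yields $(\tilde w \star_{\tilde s} \tilde x)[i] = \sigma_r(w \star_s x)[i]$ for every $i$, as required.

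The only subtle point is checking that the definition of $\tilde x$ is unambiguous: the window $\{\tilde s i,\dots,\tilde s i + \tilde k - 1\}$ used for output $i$ must not overlap with the window $\{\tilde s (i+1),\dots,\tilde s (i+1)+\tilde k - 1\}$ used for output $i+1$. This holds because $\tilde s = s\tilde k \ge \tilde k$, so the two windows are disjoint (contiguous when $s=1$, and separated by a gap of $(s-1)\tilde k$ positions when $s>1$). Consequently, the assignment is consistent, and the proposition follows. I do not anticipate any further obstacle: the combinatorial bookkeeping of matching the multinomial expansion with the convolution template is the entire content of the argument.
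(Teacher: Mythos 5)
Your proposal is correct and follows essentially the same route as the paper's proof: expand $\sigma_r(w\conv{s}x)[i]$ by the multinomial theorem, let $\tilde w$ be the vector of multinomially-scaled degree-$r$ monomials in $w$ (a rescaled Veronese image), and let $\tilde x$ collect the corresponding monomials in the entries of the window $x[si],\dots,x[si+k-1]$. Your explicit check that $\tilde s = s\tilde k \ge \tilde k$ makes the windowwise definition of $\tilde x$ unambiguous is a detail the paper leaves implicit, but it does not change the argument.
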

\begin{proof}   
See Section \ref{app:converonese} in the appendix. 
\end{proof}
Proposition \ref{prop:converonese} implies, inductively, that $\overline{\varphi}$ factors via the Segre--Veronese embedding (Definition \ref{def:segveremb}). Specifically, let $\mathbf{m} = (r^{L-1}, \ldots, r, 1)$ and $\mathbf{p} = (k_0 - 1, \ldots, k_{L-1} - 1)$. Then the following diagram commutes: 
\begin{equation*}
\begin{tikzcd}
	{\displaystyle \prod_{0 \leq i < L} \mathbb{P}\mathbb{C}^{k_i} } && \mathbb{P}\mathcal{M}_{\mathbf{d}, \mathbf{k}, \mathbf{s}, r}^\mathbb{C} \\
	& \mathcal{V}_{\mathbf{m}, \mathbf{p}}
	\arrow["\overline{\varphi}",from=1-1, to=1-3]
	\arrow["{ \nu_{\mathbf{m}, \mathbf{p}}}"', from=1-1, to=2-2]
	\arrow["\Lambda"', from=2-2, to=1-3]
\end{tikzcd}
\end{equation*}
Here, $\Lambda$ is (the projectification of) a linear map restricted to the Segre--Veronese variety. In the next section, this factorization  enables us to compute the neuromanifold's generic Euclidean distance degree.

We now provide the main results of this section. Recall that the differential of a differentiable map is defined as its local linearization, and is explicitly computable in given coordinates as the Jacobian. A differentiable map is \emph{regular} if its differential has maximal  rank at every point of the domain. 
\begin{thm}\label{thm:reg}
Assume that $r>1$. Then the projectified parametrization $\overline{\varphi}$ is regular. 
\end{thm}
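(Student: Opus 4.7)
The plan is to exploit the commutative diagram $\overline{\varphi}=\Lambda\circ\nu_{\mathbf{m},\mathbf{p}}$ preceding the statement, where $\nu_{\mathbf{m},\mathbf{p}}$ is the Segre--Veronese embedding and $\Lambda$ is (the projectification of) a linear map. Since $\nu_{\mathbf{m},\mathbf{p}}$ is a classical closed embedding of smooth projective varieties, its differential is injective everywhere; hence regularity of $\overline{\varphi}$ reduces to showing the injectivity, in the projective sense, of the differential of $\Lambda$ on each tangent space of $\mathcal{V}_{\mathbf{m},\mathbf{p}}$. Explicitly, at $p=\nu_{\mathbf{m},\mathbf{p}}([w_0],\ldots,[w_{L-1}])$ with every $w_i\neq 0$, the affine tangent cone $T_p\mathcal{V}_{\mathbf{m},\mathbf{p}}$ is spanned by vectors $e_i(\delta w_i):=w_0^{m_0}\otimes\cdots\otimes(w_i^{m_i-1}\cdot\delta w_i)\otimes\cdots\otimes w_{L-1}^{m_{L-1}}$, with $\Lambda(e_i(\delta w_i))=\tfrac{1}{m_i}D_i(\delta w_i)$ and $D_i(\delta w_i)=\partial_{w_i}\varphi_{\mathbf{w}}(\delta w_i)$. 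Since $D_i(w_i)=m_i\varphi_{\mathbf{w}}$ by the multi-Euler identity, regularity becomes the statement: if $\sum_iD_i(\delta w_i)=c\,\varphi_{\mathbf{w}}$ for some $c\in\mathbb{C}$, then $\delta w_i\in\mathbb{C}w_i$ for every $i$.

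I would prove this by induction on the number of layers $L$. The base case $L=1$ is immediate: $\varphi_{\mathbf{w}}=w_0\star_{s_0}x$ depends linearly and injectively on $w_0$ (distinct filters yield distinct Toeplitz maps), so $\delta w_0\star x=c(w_0\star x)$ forces $\delta w_0=cw_0$. For the inductive step, write $\varphi_{\mathbf{w}}=w_{L-1}\star_{s_{L-1}}\sigma_r(\tilde\varphi_{\mathbf{w}'})$ with $\mathbf{w}'=(w_0,\ldots,w_{L-2})$, and expand the hypothesis via the chain rule:
\[
\delta w_{L-1}\star\sigma_r(\tilde\varphi_{\mathbf{w}'}) \;+\; r\,w_{L-1}\star\bigl(\tilde\varphi_{\mathbf{w}'}^{\,r-1}\odot\delta\tilde\varphi\bigr) \;=\; c\,\varphi_{\mathbf{w}},
\]
where $\delta\tilde\varphi=\sum_{i<L-1}\tilde D_i(\delta w_i)$ is the analogous differential for the $(L-1)$-layer sub-CNN. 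By Corollary~\ref{cor:basepts}, $\tilde\varphi_{\mathbf{w}'}\not\equiv 0$, so Lemma~\ref{lem:fullrk} allows me to invert the outer convolution with $w_{L-1}$. The assumption $r>1$ is then essential to separate the pure $r$-th-power summand $\sigma_r(\tilde\varphi_{\mathbf{w}'})$ from the ``mixed'' summand $\tilde\varphi_{\mathbf{w}'}^{r-1}\odot\delta\tilde\varphi$: viewed as polynomials in $x$, these have distinct multiplicative structure, which forces $\delta w_{L-1}\in\mathbb{C}w_{L-1}$ and $\delta\tilde\varphi\in\mathbb{C}\tilde\varphi_{\mathbf{w}'}$. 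The latter is exactly the hypothesis of the induction applied to the sub-CNN, yielding $\delta w_i\in\mathbb{C}w_i$ for all $i<L-1$.

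The main obstacle is making the separation of the two summands rigorous, since both live inside the image of the outer convolution $w_{L-1}\star(\cdot)$. A clean route is to apply Proposition~\ref{prop:converonese} iteratively, rewriting every $\sigma_r\circ(w_i\star\cdot)$ as a single convolution by a monomial-expanded filter on a Veronese-expanded input; the whole CNN then becomes a chain of linear convolutions, whose differential is directly amenable to Lemma~\ref{lem:fullrk}. The condition $r>1$ surfaces in this language as the nontrivial factor $w_i^{r-1}$ appearing in the derivative of the Veronese map $w_i\mapsto\tilde w_i$, which ensures that the differential does not degenerate.
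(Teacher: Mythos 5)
Your reduction is sound and matches the paper's in substance: passing through the factorization $\overline{\varphi}=\Lambda\circ\nu_{\mathbf{m},\mathbf{p}}$ and the multi-Euler identity, you correctly restate regularity at $\mathbf{w}$ as the implication ``$\sum_i D_i(\delta w_i)\in\mathbb{C}\,\varphi_{\mathbf{w}}$ implies $\delta w_i\in\mathbb{C}w_i$ for all $i$,'' which is equivalent to the paper's formulation that $\ker \textnormal{J}_{\mathbf{w}}\varphi$ has dimension exactly $L-1$ (Proposition \ref{prop:kerdiff}, via Lemma \ref{lemm:homogendiff}). The base case $L=1$ and the chain-rule expansion of the inductive step are also correct and parallel the paper.

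The gap is that the entire content of the theorem sits in the one step you do not prove: that the identity $\delta w_{L-1}\star\sigma_r(\tilde\varphi_{\mathbf{w}'})+r\,w_{L-1}\star(\tilde\varphi_{\mathbf{w}'}^{\,r-1}\odot\delta\tilde\varphi)=c\,\varphi_{\mathbf{w}}$ forces $\delta w_{L-1}\in\mathbb{C}w_{L-1}$ and $\delta\tilde\varphi\in\mathbb{C}\tilde\varphi_{\mathbf{w}'}$. ``Distinct multiplicative structure'' is not an argument, and your appeal to Lemma \ref{lem:fullrk} to ``invert the outer convolution'' does not work: the Toeplitz matrix of $w_{L-1}\star_{s_{L-1}}(\cdot)$ is $d_L\times d_{L-1}$ with $d_L<d_{L-1}$, so full rank means surjective, not injective, and it cannot be cancelled; moreover the two summands are convolutions by \emph{different} filters, so there is no common factor to cancel in the first place. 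The paper resolves exactly this point in the proof of Proposition \ref{prop:kerdiff}: it splits the last filter into $\mathbf{w}^{\pm}$, isolates the minimal input variable $x[i]$ appearing in $\varphi_{\mathbf{w}'}[0]=x[i]f+g$, and extracts the $x[i]$-terms to obtain the polynomial identity $(x[i]f+g)^{r-1}\bigl(\dot w_{L-1}[0](x[i]f+g)+rw_{L-1}[0](x[i]a+b)\bigr)=g^{r-1}(\dots)$, whose right-hand side is $x[i]$-free; this is where $r>1$ genuinely enters. Your proposed alternative---iterating Proposition \ref{prop:converonese} to get ``a chain of linear convolutions directly amenable to Lemma \ref{lem:fullrk}''---cannot work as stated: a chain of linear convolutions is precisely the $r=1$ case, where the parametrization is \emph{not} regular, so any such argument must exploit the constraint that the expanded filters and inputs lie on Veronese varieties, which your sketch never does. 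As it stands, the proposal sets up the problem correctly but omits its proof.
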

\begin{proof}
The argument requires several technical computations around the differential of $\varphi$, whose details we provide in the appendix, Section \ref{sec:proofdet}. By Euler's theorem on homogeneous functions, showing that the differential of $\overline{\varphi}$ is injective at some $\mathbf{w}$ is equivalent to showing that the kernel of the differential of $\varphi$ at $\mathbf{w}$ has dimension $L-1$. The latter follows immediately from Proposition \ref{prop:kerdiff}, which provides $L-1$ independent generators for such kernel.     
\end{proof}
Next, we show that the map $\overline{\varphi}$ is \emph{birational}, meaning that it is generically one-to-one. In other words, `almost all' the fibers of $\overline{\varphi}$ are singletons. We also show that the remaining non-singleton fibers are finite.
\begin{thm}\label{thm:mainbir} 
Assume that $r > 1$. If $\overline{\varphi}_{\mathbf{w}} = \overline{\varphi}_{\mathbf{v}}$ for some $\mathbf{w}, \mathbf{v} \in  \prod_{i} \mathbb{P}\mathbb{C}^{k_i}  $, then $\mathbf{w}$ and $\mathbf{v}$ are related by a shift of their indices. More precisely, there exist $t_0, \ldots, t_{L-1} $ such that for all $i$:
\begin{align}
w_i &= (0, \ldots, 0, v_i[0], \ldots, v_i[k_i - t_i - 1]), \\ 
v_i &= (v_i[0], \ldots, v_i[k_i - t_i - 1], 0, \ldots, 0),
\end{align}
or vice versa. In particular,  $\overline{\varphi}$ is birational on its image, and all the fibers are finite.   
\end{thm}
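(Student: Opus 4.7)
My plan is to combine the regularity of $\overline{\varphi}$ (Theorem \ref{thm:reg}) with a factorization analysis of the identity $\overline{\varphi}_{\mathbf{w}}=\overline{\varphi}_{\mathbf{v}}$ using unique factorization in $\mathbb{C}[a,b]$.

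For finiteness of fibers, I would argue directly from Theorem \ref{thm:reg}. Since $d\overline{\varphi}$ is injective at every point, the (algebraic) constant rank theorem forces every fiber to be $0$-dimensional, and since $\overline{\varphi}$ is a morphism of projective varieties it is proper, so $0$-dimensional fibers are finite sets of points. This already gives the ``all fibers are finite'' clause, without any explicit description.

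For the explicit shape of the fibers, I would use the polynomial correspondence $\pi$ to reformulate the identity. Iterating Proposition \ref{prop:converonese} together with the composition rule $\pi_1(v\conv{s}w)=\pi_s(v)\pi_1(w)$ from Section \ref{sec:polconvs}, the CNN rewrites as a single wide-stride convolution $\varphi_{\mathbf{w}}(x)=\hat{q}(\mathbf{w})\conv{\hat{s}}\hat{x}$, where $\hat{x}$ depends only on $x$ and the bivariate polynomial $\pi_1(\hat{q}(\mathbf{w}))\in\mathbb{C}[a,b]$ decomposes as a product of per-layer contributions of the form $\pi_{S_i}(w_i)^{r^{L-1-i}}$ for effective strides $S_0<\cdots<S_{L-1}$ depending only on $\mathbf{k},\mathbf{s},r$. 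Since $\pi_s$ is an injective linear map, $\overline{\varphi}_{\mathbf{w}}=\overline{\varphi}_{\mathbf{v}}$ becomes
\[
\prod_{i=0}^{L-1}\pi_{S_i}(w_i)^{r^{L-1-i}} \;=\; \lambda\,\prod_{i=0}^{L-1}\pi_{S_i}(v_i)^{r^{L-1-i}}
\]
in $\mathbb{C}[a,b]$ for some $\lambda\in\mathbb{C}^{*}$. Appealing to unique factorization in the UFD $\mathbb{C}[a,b]$, each factor splits as
\[
\pi_{S_i}(w_i) \;=\; a^{S_i\alpha_i}\,b^{S_i\beta_i}\prod_{\ell}(a-c_{i,\ell}b),
\]
where $\alpha_i,\beta_i$ count the leading and trailing zeros of $w_i$, and the non-monomial roots $c_{i,\ell}\neq 0$ appear in complete $S_i$-orbits under the action of the $S_i$-th roots of unity (because $\pi_{S_i}(w_i)\in\mathbb{C}[a^{S_i},b^{S_i}]$). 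Matching linear factors on both sides layer by layer then forces the non-monomial parts to agree, and the only remaining freedom is a redistribution of $a$- and $b$-exponents -- which, through the $\pi$-correspondence, is exactly the transfer of $t_i$ zeros between the leading and trailing ends of $w_i$ versus $v_i$ in the statement. Birationality of $\overline{\varphi}$ is then a corollary: a generic $\mathbf{w}$ has no zero entries in any of its filters, so no shift is available, and the generic fiber is a singleton.

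The main obstacle is ruling out cross-layer migration of non-monomial factors: a linear form $(a-cb)$ with $c\neq 0$ could \emph{a priori} be reassigned from layer $i$ on one side of the identity to layer $j\neq i$ on the other. I would address this by exploiting the $S_i$-orbit structure as a layer-specific fingerprint: because the $S_i$ are strictly increasing and fixed by the architecture, distinct strides $S_i\neq S_j$ produce distinct orbit sizes and thus prevent non-trivial matching of non-monomial factors across layers. The monomial forms $a$ and $b$ are the sole exceptions -- they lie in every graded subring $\mathbb{C}[a^{S_i},b^{S_i}]$ -- which is precisely the source of the permitted shift ambiguity and explains why that is the only ambiguity.
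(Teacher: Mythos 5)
Your finiteness argument is fine: regularity of $\overline{\varphi}$ (Theorem \ref{thm:reg}) plus properness of a morphism of projective varieties does give zero-dimensional, hence finite, fibers. But note this alone cannot give birationality (a degree-two covering is regular with finite fibers), so the entire content of the theorem rests on your explicit description of the fibers, and that is where the argument breaks. The reduction to a single bivariate polynomial $\prod_i \pi_{S_i}(w_i)^{r^{L-1-i}}$ is \emph{lossy}: it amounts to the substitution $x[j] \mapsto t^j$, which conflates distinct monomials of $\varphi_{\mathbf{w}}$ such as $x[1]^2$ and $x[0]\,x[2]$. Consequently, equality of the products is a strictly weaker condition than $\overline{\varphi}_{\mathbf{w}} = \overline{\varphi}_{\mathbf{v}}$, and it does \emph{not} imply the shift relation. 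Concretely, take $L=2$, $r=2$, $s_0=1$, $k_0=k_1=3$, with $w_0=(1,0,0)$, $w_1=(1,-2,1)$ and $v_0=(-1,0,1)$, $v_1=(1,0,0)$. Then $\pi_1(w_0)^2\,\pi_2(w_1) = a^4(a^2-b^2)^2 = \pi_1(v_0)^2\,\pi_2(v_1)$, yet $\mathbf{w}$ and $\mathbf{v}$ are not related by any shift (neither filter pair differs by moving zeros from one end to the other), and indeed $\varphi_{\mathbf{w}}(x)[0] = x[0]^2 - 2x[1]^2 + x[2]^2 \neq (x[2]-x[0])^2 = \varphi_{\mathbf{v}}(x)[0]$. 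So the UFD analysis of the product can never recover the theorem; it cannot distinguish these two networks. (Your ``orbit fingerprint'' fix is also independently insufficient: a polynomial in $\mathbb{C}[a^{S_j},b^{S_j}]$ with $S_i \mid S_j$ lies in $\mathbb{C}[a^{S_i},b^{S_i}]$ as well, so orbit sizes do not pin factors to layers --- my example exploits exactly this.)

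The paper's proof works with the full multivariate polynomial $\varphi_{\mathbf{w}}(x)[0]$ rather than its univariate shadow. The key tool is Lemma \ref{lemm:eulerproj}: for $r>1$, the terms of $p^r$ containing a fixed variable $x[i]$ determine $p$ up to scalar. Choosing $i$ minimal so that $x[i]$ appears, only the first nonzero entries of the last-layer filters contribute terms in $x[i]$, which forces $\varphi_{\mathbf{w}'}[m] = \lambda\,\varphi_{\mathbf{v}'}[s_{L-1}k+n]$ for the truncated networks; equivariance and induction on $L$ then propagate the shift relation through all layers, and a peeling argument recovers the last filter entry by entry. If you want to salvage your strategy, you would need a layer-separation device that sees the full coefficient vector of $\varphi_{\mathbf{w}}$, not merely its image under $x[j]\mapsto t^j$; the lemma above is precisely such a device.
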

\begin{proof}
The argument involves a delicate induction over the number of layers -- see Section \ref{app:mainbir} in the appendix. 
\end{proof}
Theorem \ref{thm:mainbir} can be intuitively interpreted as stating that the projective neuromanifold is isomorphic to a Segre--Veronese variety `almost everywhere'. For $r = 1$ -- i.e., for linear CNNs -- the same result has been shown by \cite{kohn2023function} with the additional assumption on strides $s_i > 1$ for all $0 \leq i \leq L - 2$. Moreover, in the same work the authors show that the parametrization of linear CNNs has critical points, in contrast to the polynomial case. The argument is different from ours and is based on the interpretation of convolutions as polynomial multiplication. 

\begin{rmk}\label{rmk:singsmall}
From the proof of Theorem \ref{thm:mainbir}, it is evident that the shifts must satisfy arithmetic constraints involving the strides. Specifically, consider the shifts $t_i$ together with their sign, which is set positive if $w_i$ has zeros on the left, and negative otherwise. Define recursively $\tilde{t}_{-1} = 0$, $\tilde{t}_i = t_i + \tilde{t}_{i-1} / s_{i-1}$ for $i \geq 0$. Then $\tilde{t}_i$ must be an integer, and moreover $\tilde{t}_{L-1}$ must vanish. By the definition of (iterated) convolutions, this implies that if we truncate each filter by removing its zeros on the right or on the left, the resulting CNN coincides with $\varphi_{\mathbf{w}}$, albeit on a restricted domain with less input variables. Put simply, the CNNs whose (projectified) fibers are not singletons are the ones that can be defined by smaller architectures, i.e., special
fibers arise from `subnetworks'.    
\end{rmk}

An immediate consequence of Theorem \ref{thm:mainbir} is an expression for the two fundamental invariants of the neuromanifold: the dimension and the degree. Recall that for a given projective/affine variety, the latter counts, roughly speaking, the number of intersections with a generic linear subspace of dimension equal to the co-dimension of the variety. 
\begin{cor}\label{cor:finiteparam}
Assume that $r>1$. The dimension and degree of the neuromanifold are: 
\begin{align}
    \textnormal{dim}(\mathcal{M}_{\mathbf{d}, \mathbf{k}, \mathbf{s}, r}^\mathbb{C}) &= |\mathbf{k}| - L + 1, \\
    \deg(\mathcal{M}_{\mathbf{d}, \mathbf{k}, \mathbf{s}, r}^\mathbb{C}) &= (|\mathbf{k}|-L)!\prod_{0 \leq j < L} \frac{r^{(L - j - 1)(k_j - 1)}}{(k_j - 1)!}.
\end{align}
\end{cor}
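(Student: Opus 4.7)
The plan is to read off both invariants from the birationality of $\overline{\varphi}$ established in Theorem \ref{thm:mainbir}, together with a standard intersection-theoretic computation on the product of projective spaces parametrizing polynomial CNNs. For the dimension, Corollary \ref{cor:basepts} guarantees that $\overline{\varphi}$ is a well-defined morphism on the parameter space $X = \prod_{0 \leq i < L} \mathbb{P}\mathbb{C}^{k_i}$. Since $\overline{\varphi}$ is birational onto its image, it preserves dimension, so $\dim \mathbb{P}\mathcal{M}_{\mathbf{d}, \mathbf{k}, \mathbf{s}, r}^\mathbb{C} = \sum_{0 \leq i < L}(k_i - 1) = |\mathbf{k}| - L$. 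Unwinding the iterated definition of $\varphi_\mathbf{w}$ layer by layer shows that each output component of $\varphi_\mathbf{w}(x)$ is multi-homogeneous in $(w_0, \ldots, w_{L-1})$ of multi-degree $(r^{L-1}, r^{L-2}, \ldots, r, 1)$. Consequently, rescaling $w_j \mapsto \lambda_j w_j$ scales $\varphi_{\mathbf{w}}$ by $\prod_j \lambda_j^{r^{L-j-1}}$, so the affine neuromanifold $\mathcal{M}_{\mathbf{d}, \mathbf{k}, \mathbf{s}, r}^\mathbb{C}$ is the affine cone over $\mathbb{P}\mathcal{M}_{\mathbf{d}, \mathbf{k}, \mathbf{s}, r}^\mathbb{C}$, yielding $\dim \mathcal{M}_{\mathbf{d}, \mathbf{k}, \mathbf{s}, r}^\mathbb{C} = |\mathbf{k}| - L + 1$.

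For the degree, since an affine cone shares its degree with its projectivization, it suffices to compute $\deg(\mathbb{P}\mathcal{M}_{\mathbf{d}, \mathbf{k}, \mathbf{s}, r}^\mathbb{C})$. Let $H$ denote the hyperplane class in the ambient projective space of $\mathbb{P}\mathcal{M}_{\mathbf{d}, \mathbf{k}, \mathbf{s}, r}^\mathbb{C}$ and $H_j$ the hyperplane class of the factor $\mathbb{P}\mathbb{C}^{k_j}$ of $X$. From the multi-degree computed above, $\overline{\varphi}^*(H) = \sum_{0 \leq j < L} r^{L-j-1}\, H_j$. By the standard pullback formula for birational morphisms between equidimensional projective varieties,
\begin{equation*}
\deg(\mathbb{P}\mathcal{M}_{\mathbf{d}, \mathbf{k}, \mathbf{s}, r}^\mathbb{C}) = \int_{X} \overline{\varphi}^*(H)^{|\mathbf{k}| - L} = \int_{X} \Bigl( \sum_{j} r^{L-j-1} H_j \Bigr)^{|\mathbf{k}| - L}.
\end{equation*}

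Expanding the right-hand side via the multinomial theorem and using that $H_j^{k_j - 1}$ represents the point class on $\mathbb{P}\mathbb{C}^{k_j}$ while higher powers vanish, only the unique multi-index whose exponent in each $H_j$ equals $k_j - 1$ survives; this contributes the multinomial coefficient $(|\mathbf{k}| - L)!/\prod_j (k_j - 1)!$ multiplied by $\prod_j r^{(L-j-1)(k_j - 1)}$, which is precisely the claimed formula. The only care point is verifying that $\overline{\varphi}$ is an honest morphism, so that the pullback formula applies without base-locus corrections, and that the multi-degree is as claimed; both follow directly from Corollary \ref{cor:basepts} and an inspection of the layer-by-layer formula, making the remaining argument essentially a bookkeeping exercise. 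As a sanity check, one may alternatively route the computation through the commutative diagram with the Segre--Veronese embedding $\nu_{\mathbf{m}, \mathbf{p}}$: the known degree of $\mathcal{V}_{\mathbf{m}, \mathbf{p}}$ with $\mathbf{m} = (r^{L-1}, \ldots, 1)$ and $\mathbf{p} = (k_0 - 1, \ldots, k_{L-1} - 1)$ reproduces the same expression, and the linear projection $\Lambda$ preserves the degree because it is birational and, by Corollary \ref{cor:basepts}, its center is disjoint from $\mathcal{V}_{\mathbf{m}, \mathbf{p}}$.
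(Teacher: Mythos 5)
Your proposal is correct. The dimension part is identical to the paper's argument: birationality of $\overline{\varphi}$ (Theorem \ref{thm:mainbir}) gives $\dim \mathbb{P}\mathcal{M}_{\mathbf{d}, \mathbf{k}, \mathbf{s}, r}^\mathbb{C} = |\mathbf{k}| - L$, and passing to the affine cone adds one. For the degree, however, you take a genuinely different route. The paper factors $\overline{\varphi}$ through the Segre--Veronese embedding, cites the known degree formula for $\mathcal{V}_{\mathbf{m},\mathbf{p}}$ from the literature, and then argues that the birational linear projection $\Lambda$ (whose center misses the variety) preserves the degree. You instead compute directly on the parameter space $X = \prod_j \mathbb{P}\mathbb{C}^{k_j}$: since $\overline{\varphi}$ is a base-point-free morphism (Corollary \ref{cor:basepts}) of multi-degree $(r^{L-1}, \ldots, r, 1)$, the projection formula gives $\deg(\mathbb{P}\mathcal{M}_{\mathbf{d}, \mathbf{k}, \mathbf{s}, r}^\mathbb{C}) = \int_X \bigl(\sum_j r^{L-j-1} H_j\bigr)^{|\mathbf{k}|-L}$, and the multinomial expansion in the Chow ring of the product of projective spaces yields the claimed formula, with only the multi-index $a_j = k_j - 1$ surviving. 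Your computation is correct (the multi-degree in $w_j$ is indeed $r^{L-1-j}$ by unwinding the layers), and it effectively re-derives the Segre--Veronese degree formula rather than quoting it. What your approach buys is self-containedness and transparency about where the multinomial coefficient and the powers of $r$ come from, and it sidesteps the need to invoke degree-preservation under linear projection; what the paper's approach buys is brevity and uniformity with the Euclidean-distance-degree computation in Proposition \ref{prop:eddneurovar}, which reuses the same factorization through $\mathcal{V}_{\mathbf{m},\mathbf{p}}$. Both routes hinge on the same two inputs, namely Corollary \ref{cor:basepts} and Theorem \ref{thm:mainbir}.
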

\begin{proof}
See Section \ref{app:finiteparam} in the appendix. 
\end{proof}
In particular, note that if all the filter sizes $k_i := k$ are equal, then the degree is $(L(k-1))! \ r^{L(L-1)(k-1) / 2 }/ (k-1)!^L$, which grows faster than $r^{L^2/2}$ w.r.t. $L$. Therefore, as anticipated in Section \ref{sec:summres}, the dimension grows linearly w.r.t. the depth of the network, while the degree grows (super-) exponentially.  

Another consequence of the main results in this section is a description of the singular points of the neuromanifold. Since $\overline{\varphi}$ is regular and finite, the singularities of $\mathbb{P}\mathcal{M}_{\mathbf{d}, \mathbf{k}, \mathbf{s}, r}^\mathbb{C}$ coincide with CNNs whose (projectified) fiber is not a singleton. Such singularities are of \emph{nodal} type, i.e., points where the neuromanifold self-intersects, creating a finite number of (potentially overlapping) tangent spaces -- see Figure \ref{fig:neur} (center and right) for an illustration. By Remark \ref{rmk:singsmall}, singularities arise from `subnetworks'. Since  $\mathcal{M}_{\mathbf{d}, \mathbf{k}, \mathbf{s}, r}^\mathbb{C}$ is the affine cone of its projectification, the same description of singularities holds for the non-projective neuromanifold, apart from the base of the cone $\varphi_\mathbf{w} = 0$, which is a non-nodal singularity. We summarize this in the following corollary. 
\begin{cor}\label{cor:singpts}
Assume that $r>1$. A point $\varphi_\mathbf{w} \in \mathcal{M}_{\mathbf{d}, \mathbf{k}, \mathbf{s}, r}^\mathbb{C}$ is singular if, and only if, $\varphi_\mathbf{w} = 0$ or the fiber of $\varphi_\mathbf{w}$ via $\overline{\varphi}$ contains multiple points. 
In particular, singular points correspond to CNNs that can be parametrized with a smaller architecture.
\end{cor}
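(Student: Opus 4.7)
The plan is to analyze singularities of the projective neuromanifold $\mathbb{P}\mathcal{M}_{\mathbf{d}, \mathbf{k}, \mathbf{s}, r}^\mathbb{C}$ first, then lift the description to the affine cone $\mathcal{M}_{\mathbf{d}, \mathbf{k}, \mathbf{s}, r}^\mathbb{C}$. Since $\mathcal{M}_{\mathbf{d}, \mathbf{k}, \mathbf{s}, r}^\mathbb{C}$ is the affine cone over $\mathbb{P}\mathcal{M}_{\mathbf{d}, \mathbf{k}, \mathbf{s}, r}^\mathbb{C}$ (using Proposition \ref{prop:zarclosed}, which ensures that the affine neuromanifold is Zariski-closed and thus genuinely the cone over its projectivization), its singular locus is the union of the cone apex $\varphi_\mathbf{w} = 0$ (a non-nodal singularity whenever the base has positive dimension) with the preimage of the singular locus of $\mathbb{P}\mathcal{M}_{\mathbf{d}, \mathbf{k}, \mathbf{s}, r}^\mathbb{C}$ under the quotient. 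So it is enough to identify the singular points of the projective neuromanifold as exactly those with non-singleton fiber.

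For the smooth direction, suppose $\overline{\varphi}^{-1}(\varphi_\mathbf{w}) = \{\mathbf{w}\}$ is a singleton. By Theorem \ref{thm:reg}, the differential of $\overline{\varphi}$ at $\mathbf{w}$ is injective, and by Theorem \ref{thm:mainbir} the map $\overline{\varphi}$ is birational onto its image, so $\dim \mathbb{P}\mathcal{M}_{\mathbf{d}, \mathbf{k}, \mathbf{s}, r}^\mathbb{C} = \dim \prod_i \mathbb{P}\mathbb{C}^{k_i}$. Since the domain is a smooth projective variety and the map is finite (proper with finite fibers, again by Theorem \ref{thm:mainbir}), the inverse function theorem in the analytic category yields an analytic local isomorphism from a neighborhood of $\mathbf{w}$ onto a neighborhood of $\varphi_\mathbf{w}$ in the image, so $\varphi_\mathbf{w}$ is a smooth point. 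Conversely, if the fiber contains $m\geq 2$ distinct points $\mathbf{w}_1, \ldots, \mathbf{w}_m$, the same local-isomorphism argument produces $m$ local smooth branches of $\mathbb{P}\mathcal{M}_{\mathbf{d}, \mathbf{k}, \mathbf{s}, r}^\mathbb{C}$ passing through $\varphi_\mathbf{w}$. These branches must be pairwise distinct as germs: if two of them agreed on an open subset near $\varphi_\mathbf{w}$, then by irreducibility and birationality of $\overline{\varphi}$ the two preimages would have to coincide, contradicting $\mathbf{w}_i \neq \mathbf{w}_j$. Hence $\varphi_\mathbf{w}$ is a nodal singularity where $m$ distinct sheets meet, and in particular the Zariski tangent space at $\varphi_\mathbf{w}$ exceeds the expected dimension.

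The main obstacle is the distinctness-of-branches step; while regularity and finiteness of fibers are by now established, making sure that distinct preimages produce distinct local germs (not merely distinct points of the fiber) requires exactly the birationality provided by Theorem \ref{thm:mainbir}. Once both directions are in hand, the projective singular locus is precisely the set of points with multi-point fiber, and passing to the affine cone adds only the vertex $\varphi_\mathbf{w}=0$ as the statement asserts. The final ``in particular'' clause is an immediate translation using Remark \ref{rmk:singsmall}: multi-point fibers arise exactly from CNNs whose filters carry zeros that can be truncated, so that the same function is parametrized by a smaller architecture.
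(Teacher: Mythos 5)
Your proof is correct and follows essentially the same route as the paper, which likewise deduces the description of the singular locus from the regularity and finiteness of $\overline{\varphi}$ (Theorems \ref{thm:reg} and \ref{thm:mainbir}), identifies the singularities of $\mathbb{P}\mathcal{M}_{\mathbf{d}, \mathbf{k}, \mathbf{s}, r}^\mathbb{C}$ with the non-singleton fibers, and then passes to the affine cone, adding the apex $\varphi_\mathbf{w}=0$. You merely spell out in more detail the local-analytic-isomorphism step and the distinctness of branches, which the paper leaves implicit.
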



\begin{rmk}\label{rem:realcase}
As anticipated, we remark that most of the results from this section hold over $\mathbb{R}$ as well. This is the case for Corollary \ref{cor:basepts} and Proposition \ref{prop:converonese} -- since they are purely algebraic -- as well as for Corollary \ref{cor:finiteparam} -- since the dimension is independent of the coefficients -- and for Theorem \ref{thm:reg} and Theorem \ref{thm:mainbir} -- since birationality and rank of differential are invariant with respect to restriction of coefficients. However, the proof of Proposition \ref{prop:zarclosed} does not hold over $\mathbb{R}$, since it leverages on properties requiring algebraic closedness. Yet, in the following, we show that the main results of this section imply that the neuromanifold is still closed in the Zariski topology over $\mathbb{R}$.  
\end{rmk}
\begin{prop}\label{prop:zarboundary}
Assume that $r>1$. Then both $\mathbb{P}\mathcal{M}_{\mathbf{d}, \mathbf{k}, \mathbf{s}, r}$ and $\mathcal{M}_{\mathbf{d}, \mathbf{k}, \mathbf{s}, r}$ are closed in the (real) Zariski topology of their respective ambient spaces. 
\end{prop}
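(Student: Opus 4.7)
The plan is to deduce the real statement from its complex counterpart, Proposition~\ref{prop:zarclosed}, by proving the set-theoretic identity
\begin{equation*}
\mathcal{M}_{\mathbf{d}, \mathbf{k}, \mathbf{s}, r} \;=\; \mathcal{M}_{\mathbf{d}, \mathbf{k}, \mathbf{s}, r}^{\mathbb{C}} \cap \symm{r^{L-1}}{d_0}^{d_L}.
\end{equation*}
The right-hand side is Zariski closed over $\mathbb{R}$: by Proposition~\ref{prop:zarclosed}, $\mathcal{M}^{\mathbb{C}}$ is cut out by complex polynomial equations, and since $\varphi$ is defined over $\mathbb{R}$ the defining ideal is stable under conjugation, so one may take real equations whose common zero locus in the real ambient space is exactly $\mathcal{M}^{\mathbb{C}} \cap \symm{r^{L-1}}{d_0}^{d_L}$. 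The inclusion $\subseteq$ is immediate, so the content of the proof is the reverse inclusion: every real element of $\mathcal{M}^{\mathbb{C}}$ admits a \emph{real} preimage under $\varphi$.

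To show this, fix a real $p \neq 0$ (the case $p=0$ is trivial) and pick complex parameters $\mathbf{w}^{\mathbb{C}}$ with $\varphi_{\mathbf{w}^{\mathbb{C}}} = p$. By Corollary~\ref{cor:basepts}, each $w_i^{\mathbb{C}}$ is non-zero, so $[\mathbf{w}^{\mathbb{C}}] \in \prod_i \mathbb{P}\mathbb{C}^{k_i}$. Because $\varphi_{\overline{\mathbf{w}^{\mathbb{C}}}} = \overline{p} = p$, the class $[\overline{\mathbf{w}^{\mathbb{C}}}]$ lies in the same fiber $\overline{\varphi}^{-1}([p])$. Theorem~\ref{thm:mainbir} then gives, for every $i$, a non-zero complex scalar $\lambda_i$ such that $\overline{w_i^{\mathbb{C}}}$ equals $\lambda_i$ times a shift of the zeros of $w_i^{\mathbb{C}}$. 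Since conjugation acts coordinatewise and a shift preserves the non-zero middle of a filter, this relation restricts on the non-zero middles to a scalar identity $\overline{a^{(i)}} = \lambda_i\, a^{(i)}$, where $a^{(i)}$ denotes the non-zero middle of $w_i^{\mathbb{C}}$.

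Comparing moduli and arguments entrywise forces $|\lambda_i|=1$ and shows that every non-zero entry of $a^{(i)}$ has the same argument modulo $\pi$. Writing $\lambda_i = e^{i\phi_i}$, the rescaling $w_i^{\mathbb{C}} \mapsto e^{i\phi_i/2} w_i^{\mathbb{C}}$ produces a real filter $w_i^{\mathbb{R}} \in \mathbb{R}^{k_i}$. The function $\varphi_{\mathbf{w}^{\mathbb{R}}}$ is then real and proportional to $p$ by a scalar $\eta$; since both vectors are real and non-zero, $\eta \in \mathbb{R}^{*}$. Finally, $\varphi$ has degree $r^0 = 1$ in $w_{L-1}$, so scaling $w_{L-1}^{\mathbb{R}}$ by a suitable real number absorbs $\eta$ and yields real parameters mapping exactly to $p$, giving $p \in \mathcal{M}$. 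For the projective statement, the availability of arbitrary real scalings shows that $\mathcal{M}$ is the affine cone over $\mathbb{P}\mathcal{M}$, so Zariski closedness descends to the projective quotient.

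The main obstacle is the descent step above: Theorem~\ref{thm:mainbir} describes fibers only up to the combinatorial equivalence of zero-shifts, which does not obviously interact with the anti-holomorphic involution of complex conjugation. The crux is to argue that once the combinatorial part is absorbed, the residual discrepancy between $\mathbf{w}^{\mathbb{C}}$ and $\overline{\mathbf{w}^{\mathbb{C}}}$ is a single complex scalar per filter -- rigid enough to force reality of a representative, after which the remaining real scalar is easily absorbed through the linear dependence of $\varphi$ on the last filter.
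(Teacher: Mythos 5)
Your argument is correct, but it takes a genuinely different route from the paper. The paper's proof is a two-line appeal to a general principle of real algebraic geometry: the relative boundary of the image of a real polynomial map inside its Zariski closure is contained in the branch locus of the complexified map; since $\overline{\varphi}$ is regular (Theorem~\ref{thm:reg}), it is unramified, the branch locus is empty, and closedness follows. You instead prove the stronger, more explicit identity $\mathcal{M}_{\mathbf{d}, \mathbf{k}, \mathbf{s}, r} = \mathcal{M}_{\mathbf{d}, \mathbf{k}, \mathbf{s}, r}^{\mathbb{C}} \cap \symm{r^{L-1}}{d_0}^{d_L}$ by Galois descent on the fibers: the key input is the fiber description of Theorem~\ref{thm:mainbir} rather than regularity, and the conjugation-invariance of the support of a filter forces the shifts $t_i$ to vanish, leaving only the scalar relation $\overline{w_i} = \lambda_i w_i$ with $|\lambda_i| = 1$, after which a square root of $\lambda_i$ realifies each filter and the residual unimodular factor, being real, is $\pm 1$ and is absorbed into the last (linear) layer. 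This buys something the paper's proof does not state: an identifiability result that every real polynomial computable by a complex CNN is computable by a real one. The trade-off is that your proof leans on the full strength of the birationality theorem, whereas the paper only needs regularity. Two cosmetic points: the non-vanishing of each $w_i^{\mathbb{C}}$ when $p \neq 0$ is the trivial converse of Corollary~\ref{cor:basepts} (a zero filter kills the network), not the corollary itself; and you should note explicitly, as you implicitly do, that Proposition~\ref{prop:zarclosed} guarantees $\mathcal{M}^{\mathbb{C}}$ is the honest image of the complex parametrization (not merely its closure), so that a complex preimage of $p$ actually exists.
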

\begin{proof}
See Section \ref{app:zarboundary} in the appendix.
\end{proof}
In particular, the (projective) neuromanifold is a real algebraic variety. This does not hold for linear CNNs ($r=1$), in which case the closure in the Zariski topology adds points to the neuromanifold \parencite{kohn2023function}. 
\subsection{Optimization}\label{sec:optim}
We now discuss aspects of optimization of a polynomial CNN for a regression task. In particular, by leveraging on the theory of the Euclidean distance degree introduced in Section \ref{sec:eucdd}, we compute an upper bound for the number of (complex) critical points over the neuromanifold for the regression objective. 

We start by introducing the regression objective. Consider a dataset, i.e., a finite subset $\mathcal{D} \subset \mathbb{R}^{d_0} \times \mathbb{R}^{d_L}$ representing input-output pairs. The square-error loss $\mathcal{L}_{\mathcal{D}}\colon \mathcal{M}_{\mathbf{d}, \mathbf{k}, \mathbf{s}, r} \rightarrow \mathbb{R}_{\geq 0}$ is given by: 
\begin{equation}
    \label{eq:loss}
    \mathcal{L}_\mathcal{D}(\varphi_\mathbf{w}) = \sum_{(x,y) \in \mathcal{D}} \| \varphi_{\mathbf{w}}(x) - y \|^2.
\end{equation}
Training a CNN on the dataset $\mathcal{D}$ amounts to minimizing $\mathcal{L}_\mathcal{D}$ over $\mathcal{M}_{\mathbf{d}, \mathbf{k}, \mathbf{s}, r}$. Intuitively, $\varphi_{\mathbf{w}}$ is optimized to interpolate $\mathcal{D}$, which is reminiscent of a closest-point problem over the neuromanifold, but for a discrete set $\mathcal{D}$ instead of an anchor function. Indeed, let $X$ and $Y$ be the matrices whose columns are, respectively, $\nu_{r^{L-1}, d_0 -1}(x) \in \symm{r^{L-1}}{d_0}$ and $y \in \mathbb{R}^{d_L}$ for $(x,y) \in \mathcal{D}$ (in some fixed order), where $\nu$ is the Veronese embedding. It follows that $X$ has full rank for a sufficiently large $|\mathcal{D}|$. Consider the positive definite symmetric matrix $A_\mathcal{D} = XX^\top \otimes I_{d_L}$. By \cite[Section 6]{kohn2022geometry}, we have
\begin{equation}
\label{eq:loss_dist}
 \mathcal{L}_\mathcal{D} (\varphi_\mathbf{w}) =  \textnormal{d}_{A_\mathcal{D}}(\varphi_\mathbf{w}, v_\mathcal{D})^2 + \textnormal{const},
\end{equation}
where $v_\mathcal{D} = YX^\top A_\mathcal{D}^{-1}$. The latter is generic for generic $\mathcal{D}$, since $Y$ can vary independently of $X$.

Motivated by this, we compute the generic Euclidean distance degree of the neuromanifold. 
\begin{prop}\label{prop:eddneurovar}
    Assume that $r > 1$ and let $\overline{k} = \textnormal{dim}(\mathcal{M}_{\mathbf{d}, \mathbf{k}, \mathbf{s}, r} )-1 = |\mathbf{k}| - L$. The generic Euclidean distance degree of the neuromanifold is given by: 
    \begin{align*}
         \edd\left(\mathcal{M}_{\mathbf{d}, \mathbf{k}, \mathbf{s}, r} \right) = \sum_{0 \leq i \leq\overline{k} } (-1)^i \left( 2^{\overline{k}  + 1 - i} - 1 \right)(\overline{k}  - i)!  \\ \sum_{\substack{|\mathbf{\alpha}|=i \\ \forall j \  \alpha_j < k_j}}  
         \prod_{0 \leq j < L} \frac{\binom{k_j}{\alpha_j}}{(k_j - \alpha_j - 1)!} \ r^{(L - j -1)(k_j - \alpha_j - 1)},
    \end{align*}
\end{prop}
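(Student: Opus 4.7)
The plan is to observe first that substituting $\mathbf{m} = (r^{L-1}, r^{L-2}, \ldots, r, 1)$ and $\mathbf{p} = (k_0 - 1, \ldots, k_{L-1} - 1)$ into the formula of Theorem~\ref{th:segverdeg} reproduces exactly the stated expression: one checks that $|\mathbf{p}| = |\mathbf{k}| - L = \overline{k}$, $\binom{p_j + 1}{\alpha_j} = \binom{k_j}{\alpha_j}$, $(p_j - \alpha_j)! = (k_j - \alpha_j - 1)!$, and $m_j^{p_j - \alpha_j} = r^{(L - j - 1)(k_j - \alpha_j - 1)}$. Hence the proposition reduces to the identity
\[
\edd(\mathcal{M}_{\mathbf{d}, \mathbf{k}, \mathbf{s}, r}) \;=\; \edd(\mathcal{V}_{\mathbf{m}, \mathbf{p}}).
\]

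To establish this identity I would invoke the factorization $\overline{\varphi} = \Lambda \circ \nu_{\mathbf{m}, \mathbf{p}}$ recalled in Section~\ref{sec:geome}. Combining the regularity of $\overline{\varphi}$ (Theorem~\ref{thm:reg}) with its generic injectivity and finite fibers (Theorem~\ref{thm:mainbir}), and using that $\nu_{\mathbf{m}, \mathbf{p}}$ is a closed embedding, the restriction of the linear map $\Lambda$ to $\mathcal{V}_{\mathbf{m}, \mathbf{p}}$ is a regular birational morphism onto the projective neuromanifold, with $d\Lambda$ injective on every tangent space to the Segre--Veronese variety. Passing to affine cones preserves these features, so the affine neuromanifold is the linear birational image of the affine Segre--Veronese cone under $\Lambda$.

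For generic inner product $A$ and generic anchor $u$ on the ambient space of $\mathcal{M}_{\mathbf{d},\mathbf{k},\mathbf{s},r}$, the squared distance pulls back via $\Lambda$ to the quadratic function $x \mapsto (\Lambda x - u)^\top A (\Lambda x - u)$ on the ambient of $\mathcal{V}_{\mathbf{m},\mathbf{p}}$, whose quadratic part $\Lambda^\top A \Lambda$ restricts to a non-degenerate form on every tangent space to the Segre--Veronese, thanks to the injectivity of $d\Lambda$ there. The chain rule then places the critical points of this pullback on $\mathcal{V}_{\mathbf{m},\mathbf{p}}$ in bijection, via $\Lambda$, with those of the original distance on $\mathcal{M}_{\mathbf{d},\mathbf{k},\mathbf{s},r}$, once the finite special fibers of $\Lambda$ are avoided by genericity. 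The main obstacle is then verifying that the pulled-back datum $(\Lambda^\top A \Lambda,\,\Lambda^\top A u)$, despite ranging over a proper linear subfamily of the parameter space governing $\edd(\mathcal{V}_{\mathbf{m},\mathbf{p}})$, still achieves the full generic critical-point count $\edd(\mathcal{V}_{\mathbf{m},\mathbf{p}})$. I would address this by combining upper semi-continuity of the complex critical-point count with a transversality argument exploiting the full column rank of $\Lambda$ on the smooth locus of the Segre--Veronese, so that the pulled-back subfamily meets the Zariski open locus on which $\edd$ is attained. Once this equality is secured, Theorem~\ref{th:segverdeg} applied with the prescribed $\mathbf{m}$ and $\mathbf{p}$ delivers the closed formula.
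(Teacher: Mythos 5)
Your reduction is the same as the paper's: factor $\overline{\varphi} = \Lambda \circ \nu_{\mathbf{m},\mathbf{p}}$ through the Segre--Veronese embedding, argue $\edd(\mathcal{M}_{\mathbf{d},\mathbf{k},\mathbf{s},r}) = \edd(\mathcal{V}_{\mathbf{m},\mathbf{p}})$, substitute into Theorem~\ref{th:segverdeg}, and pass between a projective variety and its affine cone. Your bookkeeping for the substitution ($|\mathbf{p}|=\overline{k}$, $\binom{p_j+1}{\alpha_j}=\binom{k_j}{\alpha_j}$, $m_j^{p_j-\alpha_j}=r^{(L-j-1)(k_j-\alpha_j-1)}$, and the constraint $\alpha_j\le p_j$ becoming $\alpha_j<k_j$) is correct.

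The gap is exactly the step you flag as ``the main obstacle,'' and your proposed fix does not close it. The pulled-back quadratic form $\Lambda^\top A \Lambda$ is \emph{degenerate} whenever $\Lambda$ has a nontrivial kernel (its radical contains $\ker\Lambda$), so the pairs $(\Lambda^\top A\Lambda,\ \Lambda^\top A u)$ form a subfamily that lies entirely \emph{outside} the set of positive-definite (or non-degenerate) forms over which the generic Euclidean distance degree is quantified; there is no a priori reason this subfamily meets the Zariski-open locus where the generic count is attained, and ``full column rank of $\Lambda$ on tangent spaces'' is a statement about the differential of $\Lambda|_{\mathcal{V}}$, not about where $(\Lambda^\top A\Lambda,\Lambda^\top A u)$ sits in parameter space. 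Semicontinuity of the critical-point count is also the wrong tool: for special data the critical scheme can drop in cardinality (points escaping to infinity or merging) or jump to positive dimension, so it yields no equality. The paper avoids all of this by citing a known invariance theorem, \parencite[Corollary 6.1]{draisma2016euclidean}, which states that linear projections of varieties of codimension at least $2$ preserve the generic Euclidean distance degree; what you are attempting is essentially a from-scratch reproof of that result, which requires the ED-correspondence/duality machinery rather than a transversality hand-wave. You would also need to note that the center of the projection $\Lambda$ avoids $\mathcal{V}_{\mathbf{m},\mathbf{p}}$ (this is Corollary~\ref{cor:basepts}) and to check the codimension hypothesis; as written, the identity $\edd(\mathcal{M}_{\mathbf{d},\mathbf{k},\mathbf{s},r}) = \edd(\mathcal{V}_{\mathbf{m},\mathbf{p}})$ remains unproven in your argument.
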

\begin{proof}
See Section \ref{app:eddneurovar} in the appendix. 
\end{proof}
The above formula is challenging to analyze; numerical values are provided in Table \ref{tab:your_label}. Since it is known that for a projective variety $X$, $\textnormal{gED}(X) \geq \textnormal{deg}(X)$ \parencite[Theorem 5.4]{draisma2016euclidean}, a lower bound follows from  Corollary \ref{cor:finiteparam}. In particular, the growth is super-exponential w.r.t. the network's depth $L$. From the discussion in Section \ref{sec:eucdd},  
it follows that our formula upper-bounds the number of critical points over $(\mathcal{M}_{\mathbf{d}, \mathbf{k}, \mathbf{s}, r}^\mathbb{C})_\textnormal{reg}$ of the (complexified) loss function $\mathcal{L}_ \mathcal{D}$ for large generic datasets $\mathcal{D}$. Since real critical points over a variety remain such after complexification, the upper bound also holds for the critical points of the loss function over $(\mathcal{M}_{\mathbf{d}, \mathbf{k}, \mathbf{s}, r})_{\textnormal{reg}}$. 

A subtle point is that, in practice, optimization is performed via gradient descent over the parameter space, and not over the neuromanifold. The gradient flow of the loss -- and in particular the number of critical points -- could in principle differ due to the parametrization. However, this is not the case for polynomial CNNs: since the projectified parametrization is regular (Theorem \ref{thm:reg}), $\mathbf{w}$ is critical for $\mathcal{L}_{\mathcal{D}} \circ \varphi$ if, and only if, $\varphi_\mathbf{w}$ is critical for $\mathcal{L}_{\mathcal{D}}$. Therefore, our formula equivalently upper-bounds the number of critical points of $\mathcal{L}_\mathcal{D} \circ \varphi$ over $\varphi^{-1}((\mathcal{M}_{\mathbf{d}, \mathbf{k}, \mathbf{s}, r}^\mathbb{C})_{\textnormal{reg}}) \subseteq \mathbb{C}^{|\mathbf{k}|}$, up to the fibers of the parametrization, i.e., up to rescaling each filter. 

Lastly, note that so far we have removed the singular points from the neuromanifold in the above constructions. Therefore, we now discuss the role of singular points of the neuromanifold in the optimization. The following result shows that, excluding the trivial vanishing CNN, the (parameters of) such singular points are not critical for a generic distance minimization problem. 

\begin{prop}\label{prop:nosing}
Assume that $r>1$. Let $V \subseteq \symm{r^{L-1}}{d_0}^{d_L}$ be a linear subspace containing the neuromanifold, and $A$ be a positive-definite quadratic form on $V$. For a generic element $u \in V$, if $\mathbf{w}  \in \mathbb{R}^{|\mathbf{k}|}$ is critical for $\textnormal{d}_A(\varphi_{\bullet}, u)^2$, then either $\varphi_\mathbf{w} = 0$ or $\varphi_\mathbf{w} \in (\mathcal{M}_{\mathbf{d}, \mathbf{k}, \mathbf{s}, r})_\textnormal{reg}$.
\end{prop}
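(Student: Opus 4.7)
The plan is to show that the locus of anchors $u \in V$ admitting a critical $\mathbf{w}$ with $\varphi_\mathbf{w}$ nonzero and singular lies in a proper subvariety of $V$, so that generic $u$ avoid it. First I would introduce the \emph{bad} parameter set
$$ \Sigma = \{\mathbf{w} \in \mathbb{R}^{|\mathbf{k}|} : \varphi_\mathbf{w} \neq 0 \text{ and } \varphi_\mathbf{w} \in (\mathcal{M}_{\mathbf{d}, \mathbf{k}, \mathbf{s}, r})_\textnormal{sing}\}. $$
By Corollary \ref{cor:basepts}, every $\mathbf{w} \in \Sigma$ has all its filters $w_i$ nonzero, so Theorem \ref{thm:reg} applies and yields $\textnormal{rank}(d_\mathbf{w}\varphi) = \dim \mathcal{M}_{\mathbf{d}, \mathbf{k}, \mathbf{s}, r}$ at every such $\mathbf{w}$. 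Moreover, by Corollary \ref{cor:singpts} and Remark \ref{rmk:singsmall}, the image $\mathcal{M}' := \varphi(\Sigma)$ is contained in a finite union of neuromanifolds of strictly smaller architectures, so $\dim \mathcal{M}' < \dim \mathcal{M}_{\mathbf{d}, \mathbf{k}, \mathbf{s}, r}$.

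The criticality of $\mathbf{w}$ for $\textnormal{d}_A(\varphi_\bullet, u)^2$ is equivalent to $(\varphi_\mathbf{w} - u)$ being $A$-orthogonal to $\textnormal{image}(d_\mathbf{w}\varphi)$. I would then package this into the function-space correspondence
$$ K = \{(y, u) \in \mathcal{M}' \times V : (y - u) \perp_A \textnormal{image}(d_\mathbf{w}\varphi) \text{ for some } \mathbf{w} \text{ with } \varphi_\mathbf{w} = y\}. $$
For each $y \in \mathcal{M}'$, Theorem \ref{thm:mainbir} ensures that $\overline{\varphi}^{-1}([y])$ is finite, so the fiber of $K$ over $y$ is a finite union of affine subspaces of $V$ of dimension $\dim V - \dim \mathcal{M}_{\mathbf{d}, \mathbf{k}, \mathbf{s}, r}$ (one for each tangent branch of the nodal singularity). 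Therefore
$$ \dim K \leq \dim \mathcal{M}' + \dim V - \dim \mathcal{M}_{\mathbf{d}, \mathbf{k}, \mathbf{s}, r} < \dim V, $$
and the projection of $K$ onto $V$ is contained in a proper subvariety. Consequently, for a generic $u \in V$ no pair $(y,u)$ lies in $K$, which is precisely the conclusion: if $\mathbf{w}$ is critical and $\varphi_\mathbf{w} \neq 0$, then $\varphi_\mathbf{w} \in (\mathcal{M}_{\mathbf{d}, \mathbf{k}, \mathbf{s}, r})_\textnormal{reg}$.

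The key subtlety I expect is justifying that $\textnormal{rank}(d_\mathbf{w}\varphi)$ equals the full value $\dim \mathcal{M}_{\mathbf{d}, \mathbf{k}, \mathbf{s}, r}$ at \emph{every} point of $\Sigma$, rather than dropping precisely at singular configurations as it often does for other parametrizations. This is exactly where the global regularity established in Theorem \ref{thm:reg} pays off: the rank of $d_\mathbf{w}\varphi$ is constant on the open set $\{\mathbf{w} : \text{all } w_i \neq 0\}$, and Corollary \ref{cor:basepts} places all of $\Sigma$ inside this open set. A minor additional concern is reconciling the real and complex viewpoints, but since the bad set in $V$ is a real algebraic subset of dimension strictly less than $\dim V$, genericity over $\mathbb{R}$ follows from the above complex-dimension count.
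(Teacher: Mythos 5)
Your proof is correct and follows essentially the same route as the paper's: both arguments identify criticality with $u$ lying in the $A$-orthogonal complement of $\operatorname{image}(d_\mathbf{w}\varphi)$ translated to $\varphi_\mathbf{w}$, and then show that this relative normal bundle, restricted to the (lower-dimensional) singular locus, has total dimension $\dim V - (\dim \mathcal{M}_{\mathbf{d},\mathbf{k},\mathbf{s},r} - \dim(\mathcal{M}_{\mathbf{d},\mathbf{k},\mathbf{s},r})_{\textnormal{sing}}) < \dim V$, so its image in $V$ is negligible. One trivial nit: the implication ``$\varphi_\mathbf{w}\neq 0 \Rightarrow$ all $w_i\neq 0$'' is the easy (converse) direction of Corollary \ref{cor:basepts}, which actually states the nontrivial opposite implication.
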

\begin{proof}
See Section \ref{app:nosing} in the appendix. 
\end{proof}
Therefore, if we exclude vanishing filters, we obtain an upper bound on the number of critical points in parameter space, up to rescaling each filter, of $\mathcal{L}_{\mathcal{D}} \circ \varphi$ for large generic $\mathcal{D}$. 

\section{CONCLUSIONS AND FUTURE WORK}
In this work, we have studied aspects of the geometry and optimization of polynomial CNNs. In particular, we have proven that the projectified parametrization is regular and finite birational, derived the dimension of the neuromanifold, and related the latter to the Segre--Veronese variety. Moreover, we have rephrased the optimization of the regression loss as a distance minimization problem, and leveraged on the theory of the Euclidean distance degree to upper-bound the number of (complex) critical points for a generic large dataset. 

Our tools involve general yet powerful results from algebraic geometry. Therefore, similar arguments might be applicable to other neural architectures. For example, \emph{graph neural networks} \parencite{kipf2016semi, bronstein2021geometric} are nowadays popular in a variety of domains, and are closely related to CNNs. Therefore, exploring applications of algebraic geometry to the neuromanifold of such networks represents an interesting line for future investigation. 

A limitation of the Euclidean distance degree is that it quantifies the number of critical points, without distinguishing local maxima, minima, or saddle points. Since local minima are the stable equilibria of gradient descent to which the latter converges, counting them represents a fundamental challenge from the perspective of optimization and learning. To this end, tools from \emph{Morse theory} \parencite{milnor1963morse} -- an approach relating critical points with prescribed index to topological invariants -- might be applicable to neuromanifolds, providing insights into local minima of the loss function. This constitutes another interesting direction to investigate.

From a broader perspective, a fundamental challenge lies in extending the study of neuromanifolds beyond the algebraic setting, i.e., for activation functions that are not polynomial (e.g., sigmoid and ReLU). To this end, a promising approach is via polynomial approximation. By the Weierstrass Approximation Theorem, any continuous function can be (locally) approximated by polynomials with arbitrary precision. Therefore, tools from algebraic geometry can be potentially applied beyond the algebraic realm, at least approximately. For CNNs, our work is the first to tackle the polynomial case; extending the results to non-polynomial CNNs (e.g. ReLU CNNs) via approximation is a natural and interesting next step. 

\section*{Acknowledgements}
We thank Rainer Sinn for helpful discussions on real algebraic geometry and the recovery of polynomials from partial terms.
This work was partially supported by the Wallenberg AI, Autonomous Systems and Software Program (WASP) funded by the Knut and Alice Wallenberg Foundation.

\printbibliography

\newpage 
\appendix 

\onecolumn 
\section{ON THE DIFFERENTIAL OF THE PARAMETRIZATION}\label{sec:proofdet}
In this section, we prove technical results on the differential of the parametrization of polynomial CNNs. The regularity of the projectified parametrization follows immediately -- see Theorem \ref{thm:reg}. We adhere to the convention from Section \ref{sec:convneu} and work with complex scalars. As discussed in Remark \ref{rem:realcase}, the results hold, a posteriori, over $\mathbb{R}$ as well. Moreover, we denote by $\textnormal{J}_\mathbf{w} \varphi$ the differential of $\varphi$ at $\mathbf{w}$. Since both the domain and co-domain of $\varphi$ are vector spaces, the differential can be seen as a linear map $\textnormal{J}_\mathbf{w} \varphi \colon \mathbb{C}^{|\mathbf{k}|} \rightarrow \symm{r^{L-1}}{d_0}^{d_L}$. Moreover, for $L>1$, we denote $\mathbf{w}' = (w_{0}, \ldots, w_{L-2})$. Then $\varphi_{\mathbf{w}} = w_{L-1} \conv{s_{L-1}} \sigma_r(\varphi_{\mathbf{w}'})$, and by applying the Leibniz derivation rule we see that $\textnormal{J}_\mathbf{w} \varphi$ sends a tangent vector $\dot{\mathbf{w}} = (\dot{w}_{0}, \ldots, \dot{w}_{L-1}) \in \mathbb{C}^{|\mathbf{k}|}$ to:
\begin{equation}\label{eq:diffhadam}
\dot{w}_{L-1} \conv{s_{L-1}} \sigma_r\left(  \varphi_{\mathbf{w}'} \right) + r w_{L-1} \conv{s_{L-1}} \sigma_{r-1}\left(  \varphi_{\mathbf{w}'} \right) \odot \left( \textnormal{J}_\mathbf{w'} \varphi \right)(\dot{\mathbf{w}}'),
\end{equation}
where $\odot$ denotes the Hadamard product, i.e., the point-wise product between polynomial functions. 

\begin{lem}\label{lemm:homogendiff}
Given $\lambda_0, \ldots, \lambda_{L-1} \in \mathbb{C}$, the differential $\textnormal{J}_\mathbf{w} \varphi$ sends $(\lambda_{0}w_{0}, \ldots, \lambda_{L-1} w_{L-1})$ to:
\begin{equation}
    (\lambda_{L-1} + r \lambda_{L-2} + \cdots + r^{L-1}\lambda_0) \ \varphi_\mathbf{w}. 
\end{equation}
\end{lem}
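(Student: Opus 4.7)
The plan is to exploit the multi-homogeneity of $\varphi$ in its filters and then apply Euler's relation. First I would establish the identity
\[
\varphi_{(\lambda_0 w_0, \ldots, \lambda_{L-1} w_{L-1})} = \left( \prod_{0 \leq i < L} \lambda_i^{r^{L-1-i}} \right) \varphi_\mathbf{w}
\]
for every choice of scalars $\lambda_0, \ldots, \lambda_{L-1}$. This is a straightforward induction on $L$: the base case $L=1$ follows from the bilinearity of convolution, since $(\lambda_0 w_0) \star_{s_0} x = \lambda_0 (w_0 \star_{s_0} x)$. For the inductive step, write $\varphi_\mathbf{w} = w_{L-1} \star_{s_{L-1}} \sigma_r(\varphi_{\mathbf{w}'})$. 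Scaling the last filter by $\lambda_{L-1}$ contributes a factor $\lambda_{L-1}$ by bilinearity, whereas by the inductive hypothesis scaling the earlier filters changes $\varphi_{\mathbf{w}'}$ by the factor $\prod_{i<L-1}\lambda_i^{r^{L-2-i}}$, which the activation $\sigma_r$ raises to the $r$-th power, yielding the required exponents $r^{L-1-i}$.

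Once multi-homogeneity is in place, the lemma is a one-line application of Euler's theorem. Differentiating the above identity in $\lambda_j$ at $\lambda_j=1$ (while fixing all other $\lambda_i$ at $1$) gives
\[
\textnormal{J}_\mathbf{w} \varphi(0, \ldots, 0, w_j, 0, \ldots, 0) = r^{L-1-j}\, \varphi_\mathbf{w}
\]
for each $0 \leq j < L$. Since $\textnormal{J}_\mathbf{w}\varphi$ is linear in the tangent direction, summing these identities scaled by $\lambda_j$ yields
\[
\textnormal{J}_\mathbf{w} \varphi(\lambda_0 w_0, \ldots, \lambda_{L-1} w_{L-1}) = \sum_{0 \leq i < L} \lambda_i\, r^{L-1-i}\, \varphi_\mathbf{w},
\]
which is exactly $(\lambda_{L-1} + r\lambda_{L-2} + \cdots + r^{L-1}\lambda_0)\,\varphi_\mathbf{w}$.

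There is no real obstacle here, only a bookkeeping decision: alternatively one could bypass the homogeneity step and prove the formula by induction directly from the recursion \eqref{eq:diffhadam}, using the pointwise identity $\sigma_{r-1}(\varphi_{\mathbf{w}'}) \odot \varphi_{\mathbf{w}'} = \sigma_r(\varphi_{\mathbf{w}'})$ to collapse the Hadamard term into $\varphi_\mathbf{w}$; the inductive hypothesis then contributes an extra factor of $r$, which is exactly what produces the geometric progression of coefficients in the conclusion. I would favor the homogeneity route since it makes the role of Euler's theorem (invoked in the proof sketch of Theorem \ref{thm:reg}) transparent.
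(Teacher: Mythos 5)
Your proof is correct, and your primary route differs from the paper's. The paper proves the lemma exactly by the alternative you sketch in your last paragraph: substitute $\dot{w}_i = \lambda_i w_i$ into the recursion of Equation \eqref{eq:diffhadam}, collapse the Hadamard term via $\sigma_{r-1}(\varphi_{\mathbf{w}'}) \odot \varphi_{\mathbf{w}'} = \sigma_r(\varphi_{\mathbf{w}'})$, and induct on $L$, the factor $r$ in front of the second term producing the geometric progression of coefficients. Your preferred route instead isolates the multi-homogeneity identity $\varphi_{(\lambda_0 w_0, \ldots, \lambda_{L-1}w_{L-1})} = \bigl(\prod_i \lambda_i^{r^{L-1-i}}\bigr)\varphi_{\mathbf{w}}$ and obtains the lemma by differentiating it in each $\lambda_j$ at $1$ and using linearity of the differential; all steps check out, including the base case and the exponent bookkeeping in the inductive step. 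The two arguments are close cousins -- your homogeneity identity is itself proved by the same induction on layers -- but yours has the advantage of making the Euler-relation structure explicit (which is what Theorem \ref{thm:reg} actually invokes), and of separating a reusable statement about the parametrization from the computation of its differential; the paper's version is shorter because it works directly with the already-derived Jacobian recursion.
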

\begin{proof}
This follows immediately by substituting $\dot{w}_i = \lambda_i w_i$ for all $i$ in Equation \ref{eq:diffhadam} and by reasoning inductively on the number of layers $L$. 
\end{proof}

\begin{prop}\label{prop:kerdiff}
Suppose that $w_i \not = 0 $ for all $i$. Then the kernel of $\textnormal{J}_\mathbf{w} \varphi$ is generated, as a linear subspace of $\mathbb{C}^{|\mathbf{k}|}$, by $(0,\ldots,   0, w_{L-2}, -rw_{L-1})$, $(0, \ldots, 0, w_{L-3}, -rw_{L-2},0)$, $\ldots$, $(w_0, -rw_1, 0, \ldots, 0)$. 
\end{prop}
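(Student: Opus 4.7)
The inclusion $\supseteq$ is immediate from Lemma \ref{lemm:homogendiff}: each proposed generator $v_i := (0,\dots,0,w_i,-r w_{i+1},0,\dots,0)$ has the form $(\lambda_0 w_0,\dots,\lambda_{L-1}w_{L-1})$ with $\lambda_i = 1$, $\lambda_{i+1} = -r$, and $\lambda_j = 0$ otherwise, so that the scalar $\lambda_{L-1} + r\lambda_{L-2} + \dots + r^{L-1}\lambda_0 = r^{L-1-i} - r \cdot r^{L-2-i}$ vanishes. Linear independence of $v_0, \dots, v_{L-2}$ follows from the staircase structure of their supports together with the hypothesis $w_i \neq 0$.

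For the reverse inclusion the plan is to induct on $L$. The case $L=1$ reduces to the statement that $\dot{w}_0 \star_{s_0} x = 0$ identically in $x$ forces $\dot{w}_0 = 0$, which follows from Lemma \ref{lem:fullrk}. For the inductive step, set $\mathbf{w}' := (w_0,\dots,w_{L-2})$, $f := \varphi_{\mathbf{w}'}$, and $g := \textnormal{J}_{\mathbf{w}'}\varphi(\dot{\mathbf{w}}')$. Equation \eqref{eq:diffhadam} then translates $\textnormal{J}_\mathbf{w}\varphi(\dot{\mathbf{w}}) = 0$ into the polynomial identity
\begin{equation*}
\dot{w}_{L-1} \star_{s_{L-1}} f^r + r\, w_{L-1} \star_{s_{L-1}} (f^{r-1} \odot g) = 0. \qquad (\ast)
\end{equation*}
The crux is a \emph{key claim}: $(\ast)$ forces the existence of $\lambda \in \mathbb{C}$ with $\dot{w}_{L-1} = \lambda w_{L-1}$ and $g = -\tfrac{\lambda}{r} f$. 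Once this is in hand, Lemma \ref{lemm:homogendiff} applied to the depth-$(L-1)$ network supplies $\mu_0,\dots,\mu_{L-2}$ satisfying $\textnormal{J}_{\mathbf{w}'}\varphi(\mu_0 w_0, \dots, \mu_{L-2} w_{L-2}) = g$, so that $\dot{\mathbf{w}}' - (\mu_0 w_0, \dots, \mu_{L-2} w_{L-2})$ lies in $\ker \textnormal{J}_{\mathbf{w}'}\varphi$. By the inductive hypothesis this difference is a linear combination of the lower network's generators, which embed into $\ker \textnormal{J}_\mathbf{w}\varphi$ as $v_0,\dots,v_{L-3}$, while the supplementary summand $(\mu_0 w_0,\dots,\mu_{L-2} w_{L-2},\lambda w_{L-1})$ satisfies the homogeneity condition $\sum_j r^{L-1-j}\mu_j = 0$ (setting $\mu_{L-1} := \lambda$) and hence lies in the span of $v_0,\dots,v_{L-2}$.

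The hard part is the proportionality claim. I would approach it by viewing $(\ast)$ component-wise and factoring $f[s_{L-1}i+j]^{r-1}$ from each summand to obtain, for every output index $i$, the polynomial identity $\sum_j f[s_{L-1}i+j]^{r-1}\bigl(\dot{w}_{L-1}[j]\, f[s_{L-1}i+j] + r\, w_{L-1}[j]\, g[s_{L-1}i+j]\bigr) = 0$. A careful analysis of extremal monomials of $x$---exploiting that, by the iterated convolution structure, each $f[m]$ depends on a contiguous window of input coordinates, with windows for successive $m$ shifting by the total lower-layer stride---would separate the summands and force the bracketed factor to vanish at each position $m$, yielding $\dot{w}_{L-1}[j] f[m] + r w_{L-1}[j] g[m] = 0$ at each convolution position. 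The consistency of the resulting ratio across $j$, together with full-rankness of convolution by $w_{L-1}$ (Lemma \ref{lem:fullrk}) and non-vanishing of $f$ (Corollary \ref{cor:basepts}), then yields a common scalar $\lambda$ and the identity $g = -\tfrac{\lambda}{r} f$, completing the key claim and hence the induction.
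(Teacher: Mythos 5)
Your overall architecture matches the paper's: induction on $L$, reduction of the inductive step to the proportionality claim $\dot{w}_{L-1} = \lambda w_{L-1}$ and $\textnormal{J}_{\mathbf{w}'}\varphi(\dot{\mathbf{w}}') = -\tfrac{\lambda}{r}\varphi_{\mathbf{w}'}$, and the final bookkeeping via Lemma \ref{lemm:homogendiff} and the hyperplane condition $\sum_j r^{L-1-j}\mu_j = 0$ is correct. The gap is in the key claim itself, which you only sketch, and the sketch as written does not go through. Isolating the extremal variable $x[i]$ (the minimal input index appearing, which by the window structure occurs only in $f[0]$ and $g[0]$) shows only that the \emph{$x[i]$-containing part} of the single summand $f[0]^{r-1}\bigl(\dot{w}_{L-1}[0]f[0] + r\,w_{L-1}[0]g[0]\bigr)$ vanishes; it does not ``separate the summands'' into individually vanishing brackets, and in particular it does not directly yield $\dot{w}_{L-1}[0]f[0] + r\,w_{L-1}[0]g[0] = 0$. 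Upgrading the former statement to the latter is the actual mathematical content of the proposition.

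The paper closes exactly this step by writing $f[0] = x[i]f + g$ and $g[0] = x[i]a + b$ with $f \neq 0$ and $x[i]$ absent from $g$ and $b$, rearranging the vanishing of the $x[i]$-terms into
\[
(x[i]f+g)^{r-1}\bigl(\dot{w}_{L-1}[0](x[i]f+g) + r\,w_{L-1}[0](x[i]a+b)\bigr) = g^{r-1}\bigl(\dot{w}_{L-1}[0]g + r\,w_{L-1}[0]b\bigr),
\]
and observing that the right-hand side is free of $x[i]$ while the left-hand side, were the second factor nonzero, would have positive $x[i]$-degree because $(x[i]f+g)^{r-1}$ has $x[i]$-degree exactly $r-1 \geq 1$ and $\mathbb{C}[x]$ is an integral domain. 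This is precisely where $r>1$ enters; the statement fails for linear CNNs, where the kernel jumps at filters sharing a common root, and the fact that your argument never invokes $r>1$ is the symptom of the missing step. The remaining items you list (propagating $g = \lambda f$ to all output coordinates by equivariance, recovering $\dot{w}_{L-1}[j] = \lambda w_{L-1}[j]$ from linear independence of the $f[j]^r$) are routine, but the degree-in-$x[i]$ argument above is not, and it is absent from your proposal.
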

\begin{proof}
We proceed by induction over the number of layers $L$. For $L=1$,  $\varphi_\mathbf{w}(x) = w_0 \conv{s_0} x$, which is linear in $w_0$ and does not vanish for all $x$ since $w_0 \not = 0$. The differential is therefore injective, and its kernel is trivial. 

Suppose now that $L > 1$. In order to compute the kernel $\textnormal{J}_\mathbf{w} \varphi$, we need to solve
\begin{equation}\label{eq:kerndiff}
   \dot{w}_{L-1} \conv{s_{L-1}} \sigma_r\left(  \varphi_{\mathbf{w}'} \right) + r w_{L-1} \conv{s_{L-1}} \sigma_{r-1}\left(  \varphi_{\mathbf{w}'} \right) \odot \left( \textnormal{J}_\mathbf{w'} \varphi \right)(\dot{\mathbf{w}}') = 0. 
\end{equation}
If $\dot{\mathbf{w}}'$ belongs to the kernel of $\textnormal{J}_\mathbf{w'} \varphi$, then since $\varphi_{\mathbf{w}'} \not = 0$ by hypothesis (see Corollary \ref{cor:basepts}), we must have $\dot{w}_{L-1} = 0$. It follows from the inductive hypothesis that $\dot{w}$ is a linear combination of the tuples of filters in the statement. We therefore assume that $\left( \textnormal{J}_\mathbf{w'} \varphi\right)(\dot{\mathbf{w}}') \not = 0$. Moreover, without loss of generality, we assume that  $w_{L-1}[0] \not = 0$. 

We will now decompose $\varphi_\mathbf{w}$ as a sum of CNNs with smaller filter size on the last layer. To this end, consider the tuple of filters:
\begin{equation}
\mathbf{w}^+=(w_{0}, \ldots, w_{L-2}, (w_{L-1}[0])), \hspace{4em} \mathbf{w}^-=(w_{0}, \ldots, w_{L-2}, (0, w_{L-1}[1], \ldots, w_{L-1}[k_{L-1} -1])).
\end{equation}
Then $\varphi_\mathbf{w} = \varphi_{\mathbf{w}^+} + \varphi_{\mathbf{w}^-}$. Now, let $i$ be the minimal index such that $x[i]$ appears in $\varphi_{\mathbf{w}'}[0]$, seen as a scalar-valued polynomial in the input variable $x$. Recall the basic equivariance property of CNNs, meaning that shifting indices in the output variable is equivalent to shifting them in the input one (assuming the latter shift accounts for the strides). This implies that $x[i]$ does not appear in neither $\left( \textnormal{J}_\mathbf{w^-} \varphi \right)(\dot{\mathbf{w}}^-)$ nor $\varphi_{\mathbf{w}^-}$. Since $0 = \textnormal{J}_\mathbf{w} \varphi = \textnormal{J}_{\mathbf{w}^+} \varphi + \textnormal{J}_{\mathbf{w}^-} \varphi$, $x[i]$ does not appear in $\textnormal{J}_{\mathbf{w}^+} \varphi$ as well. Write: 
\begin{equation}
\varphi_{\mathbf{w}'}[0] = x[i]f + g, \hspace{7em} \left( \textnormal{J}_\mathbf{w'} \varphi \right)(\dot{\mathbf{w}}')[0] = x[i]a + b,
\end{equation}
where $f,g,a,b$ are polynomials such that $f \not =  0$ and $x[i]$ does not appear in $g$ nor $b$. The (vanishing) terms containing $x[i]$ in $\textnormal{J}_{\mathbf{w}^+} \varphi$ have the form:
\begin{equation}
    0 = \dot{w}_{L-1}[0] \left((x[i]f + g)^r - g^r \right) + rw_{L-1}[0]\left((x[i]f + g)^{r-1}(x[i]a + b) - g^{r-1}b\right).
\end{equation}
By manipulating the above expression, we obtain:
\begin{equation}
(x[i]f + g)^{r-1}\left(\dot{w}_{L-1}[0](x[i]f + g) + rw_{L-1}[0](x[i]a + b) \right) = g^{r-1}( \dot{w}_{L-1}[0]g + r w_{L-1}[0]b).
\end{equation}
The right-hand side of the above equation does not contain $x[i]$. Since $f \not = 0$, it follows that $\dot{w}_{L-1}[0](x[i]f + g) + rw_{L-1}[0](x[i]a + b) = 0$, implying:
\begin{equation}
\varphi_{\mathbf{w}'}[0] = x[i]a + b = \lambda \ (x[i]f + g) = \lambda \left( \textnormal{J}_\mathbf{w'} \varphi \right)(\dot{\mathbf{w}}')[0], \hspace{3em} \lambda := -\frac{\dot{w}_{L-1}[0]}{w_{L-1}[0]}.
\end{equation}
By looking at (the $0$-th entry of) Equation \ref{eq:kerndiff}, we deduce $\dot{w}_{L-1} = -r \lambda w_{L-1}$. By Lemma \ref{lemm:homogendiff}, $\dot{\mathbf{w}}' = (\lambda_{0}w_{0}, \ldots, \lambda_{L-2}w_{L-2})$ for some $\lambda_i \in \mathbb{C} \setminus \{0\}$ such that $\lambda_{L-2} + r \lambda_{r-3} + \cdots + r^{L-2} \lambda_0 = \lambda$. But then
\begin{align}
\dot{\mathbf{w}} &= (\lambda_0 w_0, \ldots,  \lambda_{L-2}w_{L-2}, -r \lambda w_{L-1}) = \\
&= \lambda (0, \ldots, 0, w_{L-2}, -rw_{L-1}) + (\lambda_{0} w_0, \ldots, \lambda_{L-3}w_{L-3}, (\lambda_{L-2} - \lambda)w_{L-2}, 0).
\end{align}
Since $(\lambda_{L-2} - \lambda) + r \lambda_{L-3} + \cdots + r^{L-2}\lambda_0 = 0$, it follows from Lemma \ref{lemm:homogendiff} that the second summand of the above expression belongs to the kernel of $ \textnormal{J}_\mathbf{w'} \varphi$, and the desired result follows from the inductive hypothesis.
\end{proof}

\section{ADDITIONAL PROOFS}\label{sec:remproof}
\subsection{Proof of Lemma \ref{lem:fullrk}}\label{app:fullrk}
\begin{proof}
Since the domain of a convolution has higher dimension than the co-domain, we need to show that the rows $A_0, \ldots, A_{d'-1}$ of the corresponding Toeplitz matrix are linearly independent. By induction on the filter size $k$, we can assume $w[0] \not = 0$. Suppose that $\sum_i a_i A_i = 0$ for some scalars $a_0, \ldots, a_{d'-1}$. Since the first column of the Toeplitz matrix has only one non-vanishing entry, we deduce $a_0 = 0$. But then, since the $s$-th column is $(w[s-1], w[0], 0, \ldots, 0)$, we similarly deduce that $a_1 = 0$, and so on. 
\end{proof}

\subsection{Proof of Proposition \ref{prop:zarclosed}}\label{app:zarclosed}
\begin{proof}
Projective morphisms over algebraically-closed fields are closed \parencite[II, \S 4, Theorem 4.9]{hartshorne2013algebraic} and, in particular, have closed image. Therefore, $\mathbb{P}\mathcal{M}_{\mathbf{d}, \mathbf{k}, \mathbf{s}, r}^\mathbb{C}$ is closed in $\mathbb{P} \symmm{r^{L-1}}{\mathbb{C}^{d_0}}^{d_L}$. Since $\varphi$ is homogeneous, the neuromanifold coincides with the affine cone of its projectification. Since affine cones of varieties are varieties, $\mathcal{M}_{\mathbf{d}, \mathbf{k}, \mathbf{s}, r}^\mathbb{C}$ is closed in $\symmm{r^{L-1}}{\mathbb{C}^{d_0}}^{d_L}$.     
\end{proof}

\subsection{Proof of Proposition \ref{prop:converonese}}\label{app:converonese}
\begin{proof} 
 Newton's multinomial expansion yields: 
  \begin{equation*}\label{eq:newtexp}
      \displaystyle (w \conv{s} x)[i]^r = \sum_{|\mathbf{a}| = r}\binom{r}{\mathbf{a}} \prod_{0 \leq j < k} w[j]^{a_j} \prod_{0 \leq j < k}x[is + j]^{a_j},
\end{equation*}
where $|\mathbf{a}| = a_0 + \cdots + a_{k-1}$ and $\binom{r}{\mathbf{a}} = \frac{r!}{a_0! \cdots a_{k-1}!}$. We construct $\widetilde{x}$ by indexing its coordinates via pairs $(i, \mathbf{a})$, where $1 \leq i < d - k$ and $\mathbf{a}$ is a multi-index with $|\mathbf{a}| = r$, in some fixed order. The corresponding coordinate of $\tilde{x}$ is $\prod_j x[i +j]^{a_j}$. Similarly, $\tilde{w}$ is indexed by $\mathbf{a}$, with corresponding coordinate $\binom{r}{\mathbf{a}}\prod_j w[j]^{a_j}$, i.e., $\tilde{w}$ is a rescaling of the Veronese embedding of $w$ of degree $r$. The claim follows immediately. 
\end{proof}

\subsection{Proof of Theorem \ref{thm:mainbir}}\label{app:mainbir}
We first provide a general result on homogeneous polynomials.
\begin{lem}\label{lemm:eulerproj}
Assume that $r > 1$. Let $d,n \in \mathbb{N}$, $0 \leq i < n$, and $p$ be a multivariate homogeneous polynomial of degree $d$ in $n$ variables over a field of characteristic $0$ (e.g., $\mathbb{R}$ or $\mathbb{C}$). Then the terms containing $x[i]$ in $p^r$ determine $p$ up to multiplicative scalar. 
\end{lem}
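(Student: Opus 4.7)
The plan is to re-express the lemma as a uniqueness statement: if $p, q$ are homogeneous polynomials of degree $d$ in $x[0], \ldots, x[n-1]$ such that $p^r - q^r$ is free of $x[i]$, then $p = \lambda q$ for some scalar $\lambda \in \mathbb{K}$. (Throughout, I will implicitly assume that $p$ genuinely involves $x[i]$; otherwise the $x[i]$-containing part of $p^r$ is empty and the conclusion is vacuous. This is the only regime in which the lemma is actually invoked in the proof of Theorem~\ref{thm:mainbir}.) Setting $y := x[i]$ and $R := \mathbb{K}[x[0], \ldots, \widehat{x[i]}, \ldots, x[n-1]]$, one has $p, q \in R[y]$ and the hypothesis becomes $p^r - q^r \in R$.

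The first step is a degree analysis. Since $p$ involves $y$ we have $\deg_y p \geq 1$, and since $p^r - q^r$ has $y$-degree $\leq 0$, the top-$y$-degree terms of $p^r$ and $q^r$ must cancel. This forces $\deg_y q = \deg_y p =: e \geq 1$, and if $a, b \in R \setminus \{0\}$ denote the $y$-leading coefficients of $p, q$, then $a^r = b^r$ in $R$. The UFD structure of $R$ combined with $\mathrm{char}\,\mathbb{K} = 0$ then yields $a = \zeta_0 b$ for some $r$-th root of unity $\zeta_0 \in \mathbb{K}$, since $r$-th roots of unity in the purely transcendental extension $\mathrm{Frac}(R)$ must be constants.

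The second step is the key factorization. Over an algebraic closure $\overline{\mathbb{K}}$ (which contains $r$ distinct $r$-th roots of unity by characteristic zero), one factors
\[
p^r - q^r \;=\; \prod_{\zeta^r = 1} (p - \zeta q).
\]
The $y$-leading coefficient of $p - \zeta q$ equals $a - \zeta b = a(1 - \zeta/\zeta_0)$, which vanishes exactly at $\zeta = \zeta_0$. Hence $\deg_y(p - \zeta q) = e$ for every $\zeta \neq \zeta_0$, while $\deg_y(p - \zeta_0 q) < e$ (with the convention $\deg 0 = -\infty$). Summing $y$-degrees across the $r$ factors gives $(r-1)e + \deg_y(p - \zeta_0 q) \leq 0$ because the product lies in $R$; since $(r-1)e \geq 1$, this forces $p - \zeta_0 q = 0$, whence $p = \zeta_0 q$ and we may take $\lambda = \zeta_0$.

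The main subtlety I expect to address carefully is the degenerate regime in which $p$ (and hence $q$) does not involve $x[i]$: there no data about $p$ is given and the lemma is only vacuously true, so an initial reduction to the nontrivial case is needed. A secondary technical point is justifying that $r$-th roots of unity in $\mathrm{Frac}(R)$ necessarily lie in $\mathbb{K}$ (a short UFD/unique factorization argument), which is what forces the scalar $\lambda$ to be a genuine field element rather than a rational function.
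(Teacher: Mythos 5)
Your proof is correct, but it takes a genuinely different route from the paper's. The paper argues by direct recovery: writing $p = \sum_j x[i]^j q_j$ with each $q_j$ free of $x[i]$ and expanding $p^r$ multinomially, it reads off the $q_j$ one at a time from the coefficients of $x[i]^t$ for $t$ decreasing from the top --- first the leading coefficient from the coefficient of $x[i]^{er}$ (determined up to an $r$-th root of unity, which is exactly the scalar ambiguity), then each subsequent $q_j$ from the coefficient of $x[i]^{e(r-1)+j}$, whose new term is $r\,q_e^{r-1}q_j$. (The paper indexes this by the total degree $d$ rather than the actual $x[i]$-degree $e$ of $p$; since the coefficient of $x[i]^d$ may vanish, the induction should really start at $e$ --- a minor slip that your argument sidesteps entirely.) You instead recast the lemma as a uniqueness statement, $p^r - q^r \in R \Rightarrow p = \lambda q$, factor $p^r - q^r = \prod_{\zeta^r=1}(p-\zeta q)$ over $\overline{\mathbb{K}}$, and eliminate all but one factor by additivity of $x[i]$-degrees, which forces $p - \zeta_0 q = 0$. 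Your version is shorter, needs no multinomial bookkeeping, does not use homogeneity, and pins down $\lambda$ as an $r$-th root of unity; the paper's version is constructive, exhibiting how $p$ is rebuilt term by term from the given data, which is closer in spirit to how the lemma is invoked in the proof of Theorem \ref{thm:mainbir}. Both arguments must exclude the degenerate case where $p$ does not involve $x[i]$ (there the given data is zero and the statement is vacuous); you make this restriction explicit, while the paper leaves it implicit.
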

\begin{proof}
Write $p = \sum_{0 \leq j \leq d}x[i]^jq_j$, where $q_j$ is a homogeneous polynomial of degree $d-j$ not containing $x[i]$. Newton's multinomial expansion yields:
\begin{equation}
p^r = \sum_{|\mathbf{a}| = r}\binom{r}{\mathbf{a}} \prod_{0 \leq j \leq d} x[i]^{j a_j}q_j^{a_j} =  \sum_{|\mathbf{a}| = r}\binom{r}{\mathbf{a}} x[i]^{\sum_{j}j a_j} \prod_{0 \leq j \leq d} q_j^{a_j} .  
\end{equation}
By following $t := \sum_{0 \leq j \leq d}ja_j$ in decreasing order, we can inductively recover $q_j$ up to scalar in decreasing order w.r.t. $j$. More specifically, starting from $t = dr$, the coefficient of $x[i]^t$ is $q_d^{d}$, from which we recover $q_d$. Then, for $t = dr -1$, the coefficient of $x[i]^t$ is $q_d^{r-1}q_{d-1}$, from which we can now recover $q_{d-1}$, and so on.
\end{proof}
We are now ready to prove Theorem \ref{thm:mainbir}.
\begin{proof}
We will prove the following equivalent statement on the non-projectified parametrization: if $\varphi_\mathbf{w}[0] = \varphi_\mathbf{v}[k]$ for some $0 < k < d_L$ and some $\mathbf{w}, \mathbf{v} \in \mathbb{C}^{|\mathbf{k}|}$ such that $w_i, v_i \not = 0$ for all $0 \leq i < L$, then $\mathbf{w}$ and $\mathbf{v}$ are related, up to rescaling, by a shift of their indices. To this end, we proceed by induction on $L$. For $L = 1$, the desired result follows from the fact that the linear map defined by a convolution determines the corresponding filter. Suppose now that $L > 1$ and that $\varphi_\mathbf{w}[0] = \varphi_\mathbf{v}[k]$. The inductive argument is similar in spirit to the one from the proof of Proposition \ref{prop:kerdiff}. Denote  $\mathbf{w}' = (w_{0}, \ldots, w_{L-2})$, and analogously for $\mathbf{v}'$. Then for all $x$: 
\begin{equation}\label{eq:shiftfil}
\varphi_\mathbf{w}(x)[0] = \sum_{0 \leq j < k_{L-1}} w_{L-1}[j]\ (\varphi_{\mathbf{w}'}(x)[j])^r = \varphi_\mathbf{v}(x)[k] = \sum_{0 \leq j < k_{L-1}} v_{L-1}[j]\ (\varphi_{\mathbf{v}'}(x)[s_{L-1}k + j])^r. 
\end{equation}
Let $i$ be the minimal index such that $x[i]$ appears in the above expression, seen as a scalar-valued polynomial in the input variable $x$. Moreover, let $m$ and $n$ be the minimal indices such that $w_{L-1}[m] \not = 0$ and $v_{L-1}[n] \not = 0$, respectively. Note that among all the summands in Equation \ref{eq:shiftfil}, $x[i]$ appears only in $w_{L-1}[m] \ (\varphi_{\mathbf{w}'}(x)[m])^r $ and $v_{L-1}[n] \ (\varphi_{\mathbf{v}'}(x)[s_{L-1}k + n])^r$. Therefore, the terms involving $x[i]$ in these two summands must coincide. By Lemma \ref{lemm:eulerproj}, $\varphi_{\mathbf{w}'}(x)[m] = \lambda \ \varphi_{\mathbf{v}'}(x)[s_{L-1}k + n]$ for some $\lambda \in \mathbb{C} \setminus \{ 0\}$. From the equivariance properties of convolutions, it follows that $\varphi_{\mathbf{w}'}(x)[j] = \lambda \ \varphi_{\mathbf{v}'}(x)[s_{L-1}k + n - m + j]$ for all $0 \leq j < d_{L-2}$. From the inductive hypothesis, for all $0\leq t < L-1$, $w_t$ and $v_t$ are related, up to rescaling, by a shift of their indices.  Moreover, Equation \ref{eq:shiftfil} expands to:
\begin{equation}\label{eq:sumindxs}
\sum_{n \leq j < k_{L-1}} v_{L-1}[j] \ (\varphi_{\mathbf{v}'}(x)[s_{L-1}k + j])^r = \sum_{m \leq j < k_{L-1}} w_{L-1}[j] \lambda^r \  (\varphi_{\mathbf{v}'}(x)[s_{L-1}k + n - m + j])^r.
\end{equation}
Again, in the above expression $x[i]$ appears only in the summands $v_{L-1}[n]\ (\varphi_{\mathbf{v}'}(x)[s_{L-1}k + n])^r $ and $w_{L-1}[m] \lambda^r \  (\varphi_{\mathbf{v}'}(x)[s_{L-1}k + n])^r$. Since these summands must coincide, we conclude that $v_{L-1}[n] = w_{L-1}[m]\lambda^r$. But then the indices of the sums in Equation \ref{eq:sumindxs} start from $n+1$ and $m+1$. By iterating this argument, we conclude that $v_{L-1}[n - m + j] = w_{L-1}[j]\lambda^r$ for all $j$, i.e., the filters of the last layer coincide up to rescaling and shifting the indices, as desired. 

Lastly, note that the birationality statement on $\overline{\varphi}$ follows immediately. Indeed, the above argument shows that the fiber of $\overline{\varphi}$ at $\varphi_{\mathbf{w}}$ is not a singleton only if some filters in $\mathbf{w}$ are padded by zeros on the left or on the right, which is a negligible condition, i.e., it does not hold almost everywhere. The remaining fibers are characterized by shifting indices of filters, and are therefore finite. 
\end{proof}

\subsection{Proof of Corollary \ref{cor:finiteparam}}\label{app:finiteparam}
\begin{proof}
Birational maps preserve the dimension. Therefore, the dimension of the projective neuromanifold is:
\begin{equation}
\textnormal{dim}(\mathbb{P}\mathcal{M}_{\mathbf{d}, \mathbf{k}, \mathbf{s}, r}^\mathbb{C}) = \dim (\mathbb{P} \mathbb{C}^{k_0} \times \cdots \times \mathbb{P} \mathbb{C}^{k_{L-1}}) = \sum_{0 \leq i < L} (k_i - 1) = |\mathbf{k}| - L.
\end{equation}
  Since $\textnormal{dim}(\mathbb{P}\mathcal{M}_{\mathbf{d}, \mathbf{k}, \mathbf{s}, r}^\mathbb{C}) = \textnormal{dim}(\mathcal{M}_{\mathbf{d}, \mathbf{k}, \mathbf{s}, r}^\mathbb{C}) -1$,  the dimension of the neuromanifold is $|\mathbf{k}| - L + 1$.

A birational linear (projective) map from an algebraic variety whose kernel does not intersect the variety preserves the degree. Since the projective neuromanifold is the image of a Segre--Veronese variety via a birational linear map, we have that $\deg(\mathbb{P}\mathcal{M}_{\mathbf{d}, \mathbf{k}, \mathbf{s}, r}^\mathbb{C}) = \deg(\mathcal{V}_{\mathbf{m}, \mathbf{p}})$. The latter can be computed via  \cite[Proposition 6.11]{kozhasov2023minimal}, as follows:
\begin{equation}
    \label{eq:deg_neuro}
    \deg(\mathcal{V}_{\mathbf{m}, \mathbf{p}}) = (|\mathbf{k}|-L)!\prod_{0 \leq j < L} \frac{r^{(L - j - 1)(k_j - 1)}}{(k_j - 1)!}.
\end{equation}
Finally, since the degree of a projective variety coincides with the one of its affine cone, we have $\deg(\mathcal{M}_{\mathbf{d}, \mathbf{k}, \mathbf{s}, r}^\mathbb{C}) = \deg(\mathbb{P}\mathcal{M}_{\mathbf{d}, \mathbf{k}, \mathbf{s}, r}^\mathbb{C})$. 
\end{proof}

\subsection{Proof of Proposition \ref{prop:zarboundary}}\label{app:zarboundary}
\begin{proof}
This follows from the regularity of the parametrization. Specifically, for an image of a real algebraic map, the relative boundary of the image inside its Zariski closure is contained in the branch locus of the complexification of the map. Since the complexified projectified parametrization $\overline{\varphi}$ is regular (Theorem \ref{thm:reg}), it is unramified, i.e., the branching locus is empty. Therefore, $\mathbb{P}\mathcal{M}_{\mathbf{d}, \mathbf{k}, \mathbf{s}, r}$ is closed in the Zariski topology, and the same holds for its affine cone $\mathcal{M}_{\mathbf{d}, \mathbf{k}, \mathbf{s}, r}$.
 \end{proof}

\subsection{Proof of Proposition \ref{prop:eddneurovar}}\label{app:eddneurovar}
\begin{proof}
From Section \ref{sec:geome}, we know that $\overline{\varphi}$ factors into the Segre--Veronese embedding $\nu_{\mathbf{m}, \mathbf{p}}$, with $\mathbf{m} = (r^{L-1}, \ldots, r, 1)$ and $\mathbf{p} = (k_0 - 1, \ldots, k_{L-1} - 1)$, followed by a linear map. By \cite[Corollary 6.1]{draisma2016euclidean}, linear projections of varieties of co-dimension $\geq 2$ do not alter the generic Euclidean distance degree. Therefore, $\edd(\mathbb{P}\mathcal{M}_{\mathbf{d}, \mathbf{k}, \mathbf{s}, r}) = \edd(\mathcal{V}_{\mathbf{m}, \mathbf{p}})$. The generic Euclidean distance degree of the Segre--Veronese variety is given by Theorem \ref{th:segverdeg}, from which our formula follows via elementary algebraic manipulations. Note that, by definition, the Euclidean distance degree of a projective variety coincides with the one of its affine cone. Since $\varphi$ is homogeneous, the affine cone of $\mathbb{P}\mathcal{M}_{\mathbf{d}, \mathbf{k}, \mathbf{s}, r}$ is $\overline{\mathcal{M}}_{\mathbf{d}, \mathbf{k}, \mathbf{s}, r}$, concluding the proof.     
\end{proof}

Below, we provide a table with numerical values of the generic Euclidean distance degree. 
\begin{table}[!h]
    \centering
    \renewcommand{\arraystretch}{1.5}
    \setlength{\tabcolsep}{12pt} 
    \begin{tabular}{c|ccccc}
        \diagbox[width=6em, height=2em]{\footnotesize $r$}{\footnotesize $k$} & \footnotesize $2$ & \footnotesize $3$ & \footnotesize $4$ & \footnotesize $5$ & \footnotesize $6$ \\
        \hline
        \footnotesize $1$ & 6 & 39 & 284 & 2205 & 17730 \\
        \footnotesize $2$ & 14 & 219 & 3772 & 68405 & 1277898 \\
        \footnotesize $3$ & 22 & 543 & 14684 & 417005 & 12186066 \\
        \footnotesize $4$ & 30 & 1011 & 37244 & 1439205 & 57202074 \\
        \footnotesize $5$ & 38 & 1623 & 75676 & 3699005 & 185917794 \\
        \footnotesize $6$ & 46 & 2379 & 134204 & 7933205 & 482134890 \\
    \end{tabular}
    \caption{The generic Euclidean distance degree of neuromanifolds with $L=2$ and $k:=k_1=k_2$.}
    \label{tab:your_label}
\end{table}

\subsection{Proof of Proposition \ref{prop:nosing}}\label{app:nosing}
\begin{proof}
Note that $\mathbf{w}$ is critical for $\textnormal{d}_A(\varphi_{\bullet}, u)^2$ if, and only if, $\varphi_{\mathbf{w}} - u$ is perpendicular, according to the scalar product induced by $A$, to the image of the differential of $\varphi$ at $\mathbf{w}$. In other words, $u$ must belong to the relative normal bundle of $\varphi$. By Proposition \ref{cor:finiteparam} and Theorem \ref{thm:reg}, such bundle consists, at a given $\varphi_\mathbf{w}$, of a finite number of affine subspaces of co-dimension $|\mathbf{k}| - L + 1 = \textnormal{dim}((\mathcal{M}_{\mathbf{d}, \mathbf{k}, \mathbf{s}, r}))$. If we constrain $\varphi_\mathbf{w}$ to be singular, then the restricted bundle has co-dimension $\textnormal{dim}(\mathcal{M}_{\mathbf{d}, \mathbf{k}, \mathbf{s}, r}) - \textnormal{dim}((\mathcal{M}_{\mathbf{d}, \mathbf{k}, \mathbf{s}, r})_\textnormal{sing}) > 0$. Belonging to it is, therefore, a negligible condition in $u \in V$, which concludes the proof. 
\end{proof}

\section{A Toy Example}\label{sec:toyexmp}
Here, we provide a toy example of a polynomial CNN, where all the quantities from our results can be easily computed. We consider a network with filter sizes $k_0 = k_1 = 2$, stride $s_0 = 1$, and activation function $\sigma_2(x) = x^2$. The function computed by the network is the homogeneous quadratic polynomial:
\begin{equation}
\varphi_\mathbf{w}(x)= a^2 c x_0^2 + 2abc x_0 x_1 + (b^2 c + a^2 d)x_1^2 + 2abd x_1 x_2 + b^2 d x_2^2,
\end{equation} 
where $w_0 = (a,b), w_1 = (c,d)$. The neuromanifold has dimension $3$, degree $4$, and Euclidean distance degree $14$. The singular points, as described in Theorem 4.6, arise when $a=d=0$ or $c=b=0$. These singular points correspond to the self-intersection of the neuromanifold, which in this example is a one-dimensional line.

\end{document}